\newtheorem{definition}{Definition}
\newtheorem{lemma}[definition]{Lemma}
\newtheorem{theorem}[definition]{Theorem}
\newcommand{\norm}[1]{\left\|#1\right\|}
\newcommand{\ip}[2]{\left\langle {#1}, {#2} \right\rangle}  
\newcommand{\R}{\mathbb{R}}  
\newcommand{\mc}[1]{\mathcal{#1}}
\newcommand{\emp}[1]{\widehat{#1}}
\DeclareMathOperator{\Regret}{Regret}    
\DeclareMathOperator*{\E}{\mathbf{E}}    
\DeclareMathOperator*{\argmin}{arg\,min}
\renewcommand{\O}{\mathcal{O}}
\newcommand{\zk}[1]{}
\newcommand{\satyen}[1]{}
\newcommand{\tl}[1]{}
\newcommand{\david}[1]{}
\newcommand{\sk}{{k_0}}
\newcommand{\wk}{{k_1}}
\begin{document}

\title{Adaptive Feature Selection: Computationally Efficient Online Sparse Linear Regression under RIP}

\author{Satyen Kale\thanks{This work was done while the author was at Yahoo Research, New York.} \\
Google Research, New York \\
\texttt{satyenkale@google.com}
\and
Zohar Karnin$^*$\\
Amazon, New York\\
\texttt{zkarnin@gmail.com}
\and
Tengyuan Liang$^*$\\
University of Chicago \\
Booth School of Business\\
\texttt{Tengyuan.Liang@chicagobooth.edu}
\and
D\'avid P\'al\\
Yahoo Research, New York\\
\texttt{dpal@yahoo-inc.com}
}

\date{}
\maketitle

\begin{abstract}
Online sparse linear regression is an online problem where an algorithm
repeatedly chooses a subset of coordinates to observe in an adversarially chosen feature vector, makes a real-valued prediction, receives the true label, and incurs the squared loss. The goal is to design an online learning algorithm with sublinear regret to the best sparse linear predictor in hindsight. Without any assumptions, this problem is known to be computationally intractable. In this paper, we make the assumption that data matrix satisfies restricted isometry property, and show that this assumption leads to computationally efficient algorithms with sublinear regret for two variants of the problem. In the first variant, the true label is generated according to a sparse linear model with additive Gaussian noise. In the second, the true label is chosen adversarially.
\end{abstract}

\section{Introduction}

In modern real-world sequential prediction problems, samples are typically high
dimensional, and construction of the features may itself be a computationally
intensive task. Therefore in sequential prediction, due to the computation and
resource constraints, it is preferable to design algorithms that compute only a
limited number of features for each new data example. One example of this
situation, from \citep{cesa2011efficient}, is medical diagnosis of a disease, in
which each feature is the result of a medical test on the patient. Since it is
undesirable to subject a patient to a battery of medical tests, we would like to
adaptively design diagnostic procedures that rely on only a few, highly
informative tests.

\emph{Online sparse linear regression} (OSLR) is a sequential prediction problem in which an algorithm is allowed to see only a small subset of coordinates of each feature vector. The problem is parameterized by 3 positive integers: $d$, the dimension of the feature vectors, $k$, the sparsity of the linear regressors we compare the algorithm's performance to, and $\sk$, a budget on the number of features that can be queried in each round by the algorithm. Generally we have $k \ll d$ and $\sk \geq k$ but not significantly larger (our algorithms need\footnote{In this paper, we use the $\tilde{O}(\cdot)$ notation to suppress factors that are polylogarithmic in the natural parameters of the problem.} $\sk = \tilde{O}(k)$).

In the OSLR problem, the algorithm makes predictions over a sequence of $T$ rounds. In each round $t$, nature chooses a feature vector $x_t
\in \R^d$, the algorithm chooses a subset of $\{1,2,\dots,d\}$ of size at most $k'$ and observes the corresponding coordinates of the feature
vector. It then makes a prediction $\emp{y}_t \in \R$ based on the
observed features, observes the true label $y_t$, and suffers loss $(y_t -
\emp{y}_t)^2$. The goal of the learner is to make the cumulative loss comparable
to that of the best $k$-sparse linear predictor $w$ in hindsight. The
performance of the online learner is measured by the {\em regret}, which is
defined as the difference between the two losses:
$$
\Regret_T = \sum_{t=1}^T \left(y_t - \emp{y}_t\right)^2 - \min_{w:\ \|w\|_0 \leq k} \sum_{t=1}^T \left(y_t - \ip{x_t}{w} \right)^2 \, .
$$
The goal is to construct algorithms that enjoy regret that is sub-linear in $T$, the total number of rounds. A sub-linear regret implies that in the asymptotic sense, the average per-round loss of the algorithm approaches the average per-round loss of the best $k$-sparse linear predictor.

Sparse regression is in general a computationally hard problem. In particular,
given $k$, $x_1, x_2, \dots, x_T$ and $y_1, y_2, \dots, y_T$ as inputs, the
offline problem of finding a $k$-sparse $w$ that minimizes the error
$\sum_{t=1}^T (y_t - \ip{x_t}{w})^2$ does not admit a polynomial time algorithm
under standard complexity assumptions~\cite{foster2015variable}. This hardness
persists even under the assumption that there exists a $k$-sparse $w^*$ such
that $y_t = \ip{x_t}{w^*}$ for all $t$.  Furthermore, the computational hardness
is present even when the solution is required to be only $\widetilde{\O}(k)$-sparse
solution and has to minimize the error only approximately;
see~\cite{foster2015variable} for details. The hardness result was extended to
online sparse regression by~\cite{foster2016online}.  They showed that for all
$\delta > 0$ there exists no polynomial-time algorithm with regret
$\O(T^{1-\delta})$ unless $NP \subseteq BPP$.

\citet{foster2016online} posed the open question of what additional assumptions
can be made on the data to make the problem tractable. In this paper, we answer
this open question by providing efficient algorithms with sublinear regret under
the assumption that the matrix of feature vectors satisfies the \emph{restricted
isometry property} (RIP)~\citep{candes2005decoding}. It has been shown that if
RIP holds and there exists a sparse linear predictor $w^*$ such that $y_t =
\ip{x_t}{w^*} + \eta_t$ where $\eta_t$ is independent noise, the offline
sparse linear regression problem admits computationally efficient algorithms,
e.g., \cite{candes2007dantzig}. RIP and related Restricted Eigenvalue Condition \citep{bickel2009simultaneous} have been widely used as a standard assumption for theoretical analysis in the compressive sensing and sparse regression literature, in the offline case. In the online setting, it is natural to ask whether sparse regression avoids the computational difficulty under an appropriate form of the RIP condition. In this paper, we answer this question in a positive way, both in the realizable setting and in the agnostic setting. As a by-product, we resolve the adaptive feature
selection problem as the efficient algorithms we propose in this paper
adaptively choose a different ``sparse'' subset of features to query at each round. This is
closely related to attribute-efficient learning (see discussion in
Section~\ref{sec:related-work}) and online model selection.

\subsection{Summary of Results}

We design polynomial-time algorithms for online sparse linear regression for two
models for the sequence $(x_1, y_1), (x_2, y_2), \dots, (x_T, y_T)$. The first
model is called the \emph{realizable} and the second is called \emph{agnostic}.
In both models, we assume that, after proper normalization, for all large enough
$t$, the matrix $X_t$ formed from the first $t$ feature vectors $x_1, x_2,
\dots, x_t$ satisfies the restricted isometry property. The two models differ in
the assumptions on $y_t$. The realizable model assumes that $y_t = \ip{x_t}{w^*} +
\eta_t$ where $w^*$ is $k$-sparse and $\eta_t$ is an independent noise. In the
agnostic model, $y_t$ can be arbitrary, and therefore, the regret bounds we obtain are worse than in the realizable setting. The models and corresponding algorithms are presented in Sections~\ref{section:realizable}~and~\ref{section:agnostic} respectively. Interestingly enough, the algorithms and their corresponding analyses are completely different in the realizable and agnostic case.

Our algorithms allow for somewhat more flexibility than the problem definition:
they are designed to work with a budget $\sk$ on the number of features that can
be queried that may be larger than the sparsity parameter $k$ of the comparator.
The regret bounds we derive improve with increasing values of $\sk$. In the case
when $\sk \approx k$, the dependence on $d$ in the regret bounds is polynomial,
as can be expected in limited feedback settings (this is analogous to polynomial
dependence on $d$ in \emph{bandit} settings). In the extreme case when $\sk = d$,
i.e. we have access to \emph{all} the features, the dependence on the dimension
$d$ in the regret bounds we prove is only \emph{logarithmic}. The interpretation
is that if we have full access to the features, but the goal is to compete with
just $k$ sparse linear regressors, then the number of data points that need to
be seen to achieve good predictive accuracy has only logarithmic dependence on
$d$. This is analogous to the (offline) compressed sensing setting where the
sample complexity bounds, under RIP, only depend logarithmically on $d$.

A major building block in the solution for the realizable setting
(Section~\ref{section:realizable}) consists of identifying the best $k$-sparse
linear predictor for the past data at any round in the prediction problem. This
is done by solving a sparse regression problem on the observed data. The
solution of this problem cannot be obtained by a simple application of say, the
Dantzig selector \citep{candes2007dantzig} since we do not observe the data
matrix $X$, but rather a subsample of its entries. Our algorithm is a variant of
the Dantzig selector that incorporates random sampling into the optimization,
and computes a near-optimal solution by solving a linear program. The resulting
algorithm has a regret bound of $\widetilde{\O}(\log T)$. This bound has optimal dependence on $T$, since even in the full information setting where all features are observed there is a lower bound of $\Omega(\log T)$ \citep{hazan-kale-2014}.

The algorithm for the agnostic setting relies on the theory of submodular
optimization. The analysis in \citep{boutsidis2015greedy} shows that the RIP
assumption implies that the set function defined as the minimum loss achievable
by a linear regressor restricted to the set in question satisfies a property
called {\em weak supermodularity}. Weak supermodularity is a relaxation of
standard supermodularity that is still strong enough to show performance bounds
for the standard greedy feature selection algorithm for solving the sparse
regression problem. We then employ a technique developed by
\citet{golovin-streeter} to construct an online learning algorithm that mimics
the greedy feature selection algorithm. The resulting algorithm has a regret
bound of $\widetilde{\O}(T^{2/3})$. It is unclear if this bound has the optimal dependence on $T$: it is easy to prove a lower bound of $\Omega(\sqrt{T})$ on the regret using standard arguments for the multiarmed bandit problem.

\subsection{Related work}
\label{sec:related-work}

A related setting is attribute-efficient learning
\citep{cesa2011efficient,HK,KS}. This is a batch learning problem in which the
examples are generated i.i.d., and the goal is to simply output a linear
regressor using only a limited number of features per example with bounded
excess risk compared to the optimal linear regressor, when given {\em full
access} to the features at test time. Since the goal is not prediction but
simply computing the optimal linear regressor, efficient algorithms exist and
have been developed by the aforementioned papers.

Without any assumptions, only inefficient algorithms for the online sparse
linear regression problem are known \citet{andras,foster2016online}.
\citet{satyen-open-problem} posed the open question of whether it is possible to
design an efficient algorithm for the problem with a sublinear regret bound.
This question was answered in the negative by \citet{foster2016online}, who
showed that efficiency can only be obtained under additional assumptions on the
data. This paper shows that the RIP assumption yields tractability in the online
setting just as it does in the batch setting.


In the realizable setting, the linear program at the heart of the algorithm is
motivated from Dantzig selection \cite{candes2007dantzig} and error-in-variable
regression \cite{rosenbaum2010sparse,belloni2016linear}.
The problem of finding the best sparse linear predictor when only a sample of
the entries in the data matrix is available is also discussed by
\citet{belloni2016linear} (see also the references therein). In fact, these
papers solve a more general problem where we observe a matrix $Z$ rather than
$X$ that is an unbiased estimator of $X$. While we can use their results in a
black-box manner, they are tailored for the setting where the variance of each
$Z_{ij}$ is constant and it is difficult to obtain the exact dependence on this
variance in their bounds. In our setting, this variance can be linear in the
dimension of the feature vectors, and hence we wish to control the dependence on
the variance in the bounds. Thus, we use an algorithm that is similar to the one
in \cite{belloni2016linear}, and provide an analysis for it (in the
appendix). As an added bonus, our algorithm results in solving a linear program
rather than a conic or general convex program, hence admits a solution that is
more computationally efficient.

In the agnostic setting,  the computationally efficient algorithm we propose is motivated from (online) supermodular optimization \citep{natarajan1995sparse, boutsidis2015greedy, golovin-streeter}. The algorithm is computationally efficient and enjoys sublinear regret under an RIP-like condition, as we will show in Section~\ref{section:agnostic}. This result can be contrasted with the known computationally prohibitive algorithms for online sparse linear regression \citep{andras,foster2016online}, and the hardness result without RIP \citep{foster2015variable, foster2016online}.

\subsection{Notation and Preliminaries}

For $d \in \mathbb{N}$, we denote by $[d]$ the set $\{1,2,\dots,d\}$.  For a vector in $x \in \R^d$,
denote by $x(i)$ its $i$-th coordinate. For a subset $S \subseteq [d]$, we use the notation $\R^S$ to indicate the vector space spanned by the coordinate axes indexed by $S$ (i.e. the set of all vectors $w$ supported on the set $S$). For a vector $x \in \R^d$, denote by $x(S) \in \R^{d}$ the projection of $x$ on $\R^S$. That is, the coordinates of $x(S)$ are
$$
x(S)(i) =
\begin{cases}
x(i) & \text{if $i \in S$,} \\
0  & \text{if $i \not \in S$,}
\end{cases}
\qquad \text{for $i=1,2,\dots,d$.}
$$
Let $\ip{u}{v} = \sum_i u(i) \cdot v(i)$ be the inner product of vectors $u$ and $v$. 

For $p \in [0,\infty]$, the $\ell_p$-norm of a vector $x \in \R^d$ is denoted by
$\norm{x}_p$. For $p \in (0, \infty)$, $\|x\|_p = (\sum_i |x_i|^p)^{1/p}$,
$\|x\|_\infty = \max_i |x_i|$, and $\norm{x}_0$ is the number of non-zero
coordinates of $x$.

The following definition will play a key role:

\begin{definition}[Restricted Isometry Property~\cite{candes2007dantzig}]
\label{definition:RIP}
Let $\epsilon\in (0,1)$ and $k \ge 0$.
We say that a matrix $X \in \R^{n \times d}$ satisfies
\emph{restricted isometry property} (RIP) with parameters $(\epsilon, k)$
if for any $w \in \R^d$ with $\norm{w}_0 \le k$ we have
\begin{align*}
(1 - \epsilon) \norm{w}_2 \le \frac{1}{\sqrt{n}} \norm{X w}_2 \le (1 + \epsilon) \norm{w}_2.
\end{align*}
\end{definition}

One can show that RIP holds with overwhelming probability if $n =
\Omega(\epsilon^{-2} k\log(ed/k))$ and each row of the matrix is sampled
independently from an isotropic sub-Gaussian distribution. In the realizable setting, the sub-Gaussian
assumption can be relaxed to incorporate heavy tail distribution via the ``small
ball" analysis introduced in~\citet{mendelson2014learning}, since we only require one-sided lower isometry property.

\subsection{Proper Online Sparse Linear Regression}

We introduce a variant of online sparse regression (OSLR), which we call
\emph{proper online sparse linear regression (POSLR)}. The adjective ``proper''
is to indicate that the algorithm is required to output a weight vector in each
round and its prediction is computed by taking an inner product with the feature
vector.

We assume that there is an underlying sequence $(x_1, y_1), (x_2, y_2), \dots, (x_T, y_T)$
of \emph{labeled examples} in $\R^d \times \R$. In each round $t=1,2,\dots,T$,
the algorithm behaves according to the following protocol:
\begin{enumerate}
\item Choose a vector $w_t \in \R^d$ such that $\norm{w_t}_0 \le k$.
\item Choose $S_t \subseteq [d]$ of size at most $\sk$.
\item Observe $x_t(S_t)$ and $y_t$, and incur loss $(y_t - \ip{x_t}{w_t})^2$.
\end{enumerate}
Essentially, the algorithm makes the prediction $\emp{y}_t := \ip{x_t}{w_t}$ in round $t$. The regret after $T$ rounds of an algorithm with respect to $w \in \R^d$ is
$$
\Regret_T(w) = \sum_{t=1}^T \left(y_t - \ip{x_t}{w_t}\right)^2 - \sum_{t=1}^T \left(y_t - \ip{x_t}{w} \right)^2 \, .
$$
The regret after $T$ rounds of an algorithm with respect to the best $k$-sparse linear regressor is defined as
$$
\Regret_T = \max_{w:\ \|w\|_0 \le k} \Regret_T(w) \, .
$$

Note that any algorithm for POSLR gives rise to an algorithm for OSLR.
Namely, if an algorithm for POSLR chooses $w_t$ and $S_t$, the corresponding
algorithm for OSLR queries the coordinates $S_t \cup \{ i ~: w_t(i) \neq 0 \}$.
The algorithm for OSLR queries at most $\sk + k$ coordinates and has the same
regret as the algorithm for POSLR.

Additionally, POSLR allows parameters settings which do not have
corresponding counterparts in OSLR. Namely, we can consider the sparse
``full information'' setting where $\sk = d$ and $k \ll d$.

We denote by $X_t$ the $t \times d$ matrix of first $t$ unlabeled samples i.e.
rows of $X_t$ are $x_1^T, x_2^T, \dots, x_t^T$.
Similarly, we denote by $Y_t \in \R^t$ the vector of first $t$ labels $y_1, y_2, \dots, y_t$.
We use the shorthand notation $X$, $Y$ for $X_T$ and $Y_T$ respectively.

In order to get computationally efficient algorithms, we assume that that for
all $t \ge t_0$, the matrix $X_t$ satisfies the restricted isometry condition.
The parameter $t_0$ and RIP parameters $k,\epsilon$ will be specified later.

\section{Realizable Model}
\label{section:realizable}

In this section we design an algorithm for POSLR for the realizable model. In
this setting we assume that there is a vector $w^* \in \R^d$ such that
$\norm{w^*}_0 \le k$ and the sequence of labels $y_1, y_2, \dots, y_T$ is
generated according to the linear model
\begin{equation}
\label{equation:stochastic-model}
y_t = \ip{x_t}{w^*} + \eta_t \; ,
\end{equation}
where $\eta_1, \eta_2, \dots, \eta_T$ are independent random variables from
$N(0, \sigma^2)$. We assume that the standard deviation $\sigma$, or an upper
bound of it, is given to the algorithm as input. We assume that $\norm{w^*}_1
\le 1$ and $\norm{x_t}_\infty \le 1$ for all $t$.

For convenience, we use $\eta$ to denote the vector $(\eta_1, \eta_2, \dots,
\eta_T)$ of noise variables.

\subsection{Algorithm}
\label{section:realizable-algorithm}

The algorithm maintains an unbiased estimate $\emp{X}_t$ of the matrix $X_t$.
The rows of $\emp{X}_t$ are vectors $\emp{x}_1^T, \emp{x}_2^T, \dots,
\emp{x}_t^T$ which are unbiased estimates of $x_1^T, x_2^T, \dots, x_t^T$. To
construct the estimates, in each round $t$, the set $S_t \subseteq [d]$ is
chosen uniformly at random from the collection of all subsets of $[d]$ of size
$\sk$. The estimate is
\begin{equation}
\label{equation:unbiased-estimate}
\emp{x}_t = \frac{d}{\sk} \cdot x_t(S_t).
\end{equation}

To compute the predictions of the algorithm, we consider the linear program
\begin{align}
\label{equation:conv.prog}
\begin{aligned}
\text{minimize} \norm{w}_1 \ & \text{s.t.} \
\norm{\frac{1}{t} \emp{X}_t^T \left( Y_t - \emp{X}_t w \right)  + \frac{1}{t} \emp{D}_t w}_{\infty} \\
& \quad \le C \sqrt{ \frac{d \log (td/\delta)}{t \sk} } \left(\sigma + \frac{d}{\sk}  \right).
\end{aligned}
\end{align}
Here, $C > 0$ is a universal constant, and $\delta \in (0,1)$ is the allowed
failure probability. $\emp{D}_t$, defined in
equation~\eqref{equation:diag.bias}, is a diagonal matrix that offsets the bias
on the $\text{diag}(\emp{X}_t^T \emp{X}_t)$.

The linear program~\eqref{equation:conv.prog} is called the Dantzig selector. We
denote its optimal solution by $\emp{w}_{t+1}$. (We define $\emp{w}_1 = 0$.)

Based on $\emp{w}_t$, we construct $\widetilde{w}_t \in \R^d$. Let $|\emp{w}_t(i_1)| \ge
|\emp{w}_t(i_2)| \ge \dots \ge |\emp{w}_t(i_d)|$ be the coordinates
sorted according to the their absolute value, breaking ties according to their
index. Let $\widetilde{S}_t = \{i_1, i_2, \dots, i_k\}$ be the top $k$ coordinates.
We define $\widetilde{w}_t$ as
\begin{equation}
\label{equation:top-k-coordinates}
\widetilde{w}_t = \emp{w}_t(\widetilde{S}_t).
\end{equation}
The actual prediction $w_t$ is either zero if $t \le t_0$ or
$\widetilde{w}_s$ for some $s \le t$ and it gets
updated whenever $t$ is a power of $2$.

\begin{algorithm}[ht]
\caption{Dantzig Selector for POSLR}
\label{algorithm:dantzig}
\begin{algorithmic}[1]
\REQUIRE{$T$, $\sigma$, $t_0$, $k$, $k_0$}
\FOR{$t = 1,2,\dots,T$}
\IF{$t \le t_0$}
\STATE{Predict $w_t = 0$}
\ELSIF{$t$ is a power of $2$}
\STATE{Let $\emp{w}_t$ be the solution of linear program~\eqref{equation:conv.prog}}
\STATE{Compute $\widetilde{w}_t$ according to~\eqref{equation:top-k-coordinates}}
\STATE{Predict $w_t = \widetilde{w}_t$}
\ELSE
\STATE{Predict $w_t = w_{t-1}$}
\ENDIF
\STATE{Let $S_t \subseteq [d]$ be a random subset of size $\sk$ }
\STATE{Observe $x_t(S_t)$ and $y_t$}
\STATE{Construct estimate $\emp{x}_t$ according to~\eqref{equation:unbiased-estimate}}
\STATE{Append $\emp{x}_{t}^T$ to $\emp{X}_{t-1}$ to form $\emp{X}_t \in \R^{t\times d}$}
\ENDFOR
\end{algorithmic}
\end{algorithm}

The algorithm queries at most $k+\sk$ features each round, and the linear
program can be solved in polynomial time using simplex method or interior point
method. The algorithm solves the linear program only $\lceil\log_2 T \rceil$
times by using the same vector in the rounds $2^s,\ldots,2^{s+1}-1$. This lazy
update improves both  the computational aspects of the algorithm and the regret
bound.

\subsection{Main Result}
\label{section:realizable-main}

The main result in this section provides a logarithmic regret bound under the
following assumptions~\footnote{A more precise statement with the exact
dependence on the problem parameters can be found in
the appendix.}
\begin{itemize}
\item The feature vectors have the property that for any $t \geq t_0$, the
matrix $X_t$ satisfies the RIP condition with $(\frac{1}{5}, 3k)$, with $t_0 = \O(k
\log(d)\log(T))$.

\item The underlying POSLR online prediction problem has a sparsity budget of
$k$ and observation budget $\sk$.

\item The model is realizable as defined in
equation~\eqref{equation:stochastic-model} with i.i.d unbiased Gaussian
noise with standard deviation $\sigma = \O(1)$.
\end{itemize}

\begin{theorem} \label{theorem:realizable}
For any $\delta>0$, with probability at least $1-\delta$, Algorithm~\ref{algorithm:dantzig}
satisfies
$$
\Regret_T = \O \left( k^2 \log(d/\delta) (d/\sk)^3 \log(T) \right).
$$
\end{theorem}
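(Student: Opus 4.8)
The plan is to upper bound the regret by a sum of squared prediction errors produced by the algorithm, control the per-round estimation error through the RIP, and then use the lazy (dyadic) updates to collapse the bound into a geometric series of only $\O(\log T)$ terms.

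\medskip\noindent\textbf{Step 1: reduction of the regret.} Substituting $y_t = \ip{x_t}{w^*}+\eta_t$ into both the algorithm's loss and the loss of the hindsight-optimal $k$-sparse comparator $\bar w$, the common term $\sum_t \eta_t^2$ cancels and, writing $u := w^* - \bar w$ (which is $2k$-sparse),
\begin{equation*}
\Regret_T \le \sum_{t=1}^T \ip{x_t}{w^* - w_t}^2 + 2\sum_{t=1}^T \eta_t \ip{x_t}{w^*-w_t} + \max_{\norm{u}_0 \le 2k}\Big(2\ip{\eta}{X u} - \norm{Xu}_2^2\Big).
\end{equation*}
The last term is the classical fixed-design overfitting term: for each fixed support $S$ of size $2k$ the inner maximum equals $\norm{P_S \eta}_2^2$, where $P_S$ is the projection onto the at-most-$2k$-dimensional column space of $X_S$, so a $\chi^2$ tail bound and a union bound over the $\binom{d}{2k}$ supports give $\O(\sigma^2 k \log(d/\delta))$, which is lower order. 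The rounds $t \le t_0$, where $w_t = 0$, contribute only $\O(t_0) = \O(k\log d \log T)$ since $\norm{w^*}_1 \le 1$ and $\norm{x_t}_\infty \le 1$. It remains to handle the first two sums.

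\medskip\noindent\textbf{Step 2: estimation error (the crux).} I would prove, as the main lemma, that with probability $1-\delta$ the comparator $w^*$ is feasible for~\eqref{equation:conv.prog} at every update round, and that consequently $\norm{\emp w_t - w^*}_2 = \O(k\,\lambda_t)$, where $\lambda_t := C\sqrt{d\log(td/\delta)/(t\sk)}\,(\sigma + d/\sk)$ is the program's right-hand side. Feasibility follows from an $\ell_\infty$ concentration bound on the centered score $\tfrac1t \emp X_t^\top(Y_t - \emp X_t w^*) + \tfrac1t \emp D_t w^*$: its mean is zero precisely because $\emp D_t$ cancels the bias that the rescaling $\emp x_t = (d/\sk)\,x_t(S_t)$ induces on $\mathrm{diag}(\emp X_t^\top \emp X_t)$, while its per-coordinate variance is inflated by $\approx d/\sk$ (hence the $\sqrt{d/\sk}$ and $d/\sk$ factors in $\lambda_t$), and a Bernstein/sub-Gaussian bound with a union bound over the $\lceil \log_2 T\rceil$ update rounds yields the $\log(td/\delta)$ factor. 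For the error bound, minimality of $\norm{\emp w_t}_1$ puts $h := \emp w_t - w^*$ into the cone $\norm{h_{S^c}}_1 \le \norm{h_S}_1$, and combining $h^\top \Sigma_t h \le \norm{h}_1\norm{\Sigma_t h}_\infty$ (with $\Sigma_t := \tfrac1t X_t^\top X_t$) with the RIP restricted-eigenvalue lower bound on $h^\top \Sigma_t h$ gives the estimate; the matrix-uncertainty term $\norm{\emp{\Sigma}_t - \Sigma_t}_\infty \norm{h}_1$ is what forces the sparsity factor up to $k$ rather than $\sqrt k$, and $t_0 = \O(k\log d\log T)$ is chosen so that RIP holds and this term can be absorbed. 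A standard top-$k$ truncation inequality then gives $\norm{\widetilde w_t - w^*}_2 = \O(\norm{\emp w_t - w^*}_2)$.

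\medskip\noindent\textbf{Step 3: conversion and summation.} On each dyadic block $t \in [2^s, 2^{s+1})$ the predictor equals the fixed, block-measurable vector $w_t = \widetilde w_{2^s}$, and $v_s := w^* - \widetilde w_{2^s}$ is $2k$-sparse. RIP with parameter $3k$ on $X_{2^{s+1}-1}$ gives $\sum_{t=2^s}^{2^{s+1}-1}\ip{x_t}{v_s}^2 \le \norm{X_{2^{s+1}-1} v_s}_2^2 = \O(2^s \norm{v_s}_2^2)$. Substituting $\norm{v_s}_2^2 = \O(k^2 \lambda_{2^s}^2)$ from Step 2, the factors of $2^s$ cancel and each block contributes $\O\!\big(k^2 (d/\sk)^3 \log(Td/\delta)\big)$ to the prediction error; summing over the $\O(\log T)$ blocks yields the claimed $\O\!\big(k^2 \log(d/\delta)(d/\sk)^3 \log T\big)$. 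The noise cross-term $2\sum_t \eta_t \ip{x_t}{w^* - w_t}$ is handled block by block: since $v_s$ is determined by data preceding the block, each block sum is a sum of conditionally independent mean-zero sub-Gaussians with variance $\O(\sigma^2 2^s\norm{v_s}_2^2)$, so it is $\O(\sigma\, k (d/\sk)^{3/2}\log T)$ after a union bound, which is dominated by the prediction error.

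\medskip The main obstacle is Step 2: because the algorithm never observes $X_t$ but only the heavily subsampled, rescaled estimate $\emp X_t$ whose entries have variance of order $d/\sk$, one must (i) obtain $\ell_\infty$ concentration of the bias-corrected normal equations with the exact $d/\sk$ dependence, and (ii) transfer the RIP of the \emph{true} matrix $X_t$ into an error bound for the \emph{perturbed} program. This is precisely where the analysis must depart from the off-the-shelf Dantzig selector and from \citep{belloni2016linear}, and where the correct power of $k$ and the threshold $t_0$ are pinned down.
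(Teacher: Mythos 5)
Your proposal is correct and, at the level of architecture, it is the paper's proof: the same regret decomposition around $w^*$ (your third term is exactly what Lemma~\ref{lem:wstart_reg} controls, and you resolve it by the same $\chi^2$-tail-plus-union-bound computation, merely phrased as $\max_{u}\left(2\ip{\eta}{Xu}-\norm{Xu}_2^2\right)=\norm{P_S\eta}_2^2$ rather than via pseudo-inverses); the same key lemma, namely feasibility of $w^*$ for the bias-corrected program via Bernstein bounds on the subsampled second-moment estimates (Lemmas~\ref{lemma:localization} and~\ref{lemma:concentration-inequality}) followed by an RIP-based error bound (Lemma~\ref{lemma:dantzig}) and the top-$k$ truncation (Lemma~\ref{lemma:sparse}); and the same treatment of the dyadic updates, the martingale cross term, and the $t\le t_0$ rounds. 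The one genuine deviation is where sparsity enters the summation. The paper bounds each round separately, $|\ip{x_t}{w^*-w_t}|\le\sqrt{2k}\,\norm{w_t-w^*}_2$, and sums the squares; you instead sum $\ip{x_t}{v_s}^2$ over a dyadic block and invoke the \emph{upper} isometry of RIP on the $2k$-sparse $v_s$ to get $\O(2^s\norm{v_s}_2^2)$ directly. You land on the stated $k^2$ only because you paired this with the coarser estimation rate $\norm{\emp{w}_t-w^*}_2=\O(k\lambda_t)$: your Step 2 remark that the matrix-uncertainty term forces the $\ell_2$ rate up to $k$ is not accurate — the paper absorbs that term into the $(\sigma+d/\sk)$ envelope using $\norm{\Delta}_1\le 2\norm{w^*}_1\le 2$ (Lemma~\ref{lemma:localization}), and Lemma~\ref{lemma:dantzig} proves the sharper $\ell_2$ rate $\O(\sqrt{k}\,\lambda_t)$, the factor $k$ appearing only in the $\ell_1$ rate and in the per-round Cauchy--Schwarz. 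Since your claim is weaker than what is provable, this is harmless rather than a gap; amusingly, combining the paper's $\sqrt{k}\,\lambda_t$ rate with your block-level RIP conversion would sharpen the theorem to $\O\left(k(d/\sk)^3\log(d/\delta)\log T\right)$, a factor $k$ better than either writeup obtains.
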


The theorem asserts that an $\O(\log T)$ regret bound is efficiently achievable
in the realizable setting. Furthermore when $\sk = \Omega(d)$ the regret scales
as $\log(d)$ meaning that we do not necessarily require $T \geq d$ to obtain a
meaningful result. We note that the complete expression for arbitrary $t_0, \sigma$ is given in \eqref{equation:regret-realizable} in the appendix.

The algorithm can be easily understood via the error-in-variable equation
\begin{align*}
y_t & = \ip{x_t}{w^*} + \eta_t \; , \\
\emp{x}_t & = x_t + \xi_t.
\end{align*}
with $\E[\xi_t] = \E[\emp{x}_t - x_t] = 0$, where the expectation is
taken over random sampling introduced by the algorithm when performing feature
exploration. The learner observes $y_t$ as well as the ``noisy'' feature vector
$\emp{x}_t$, and aims to recover $w^*$.

As mentioned above, we (implicitly) need an unbiased estimator of $X_t^T X_t$.
By taking $\emp{X}_t^T \emp{X}_t$ it is easy to verify that the off-diagonal
entries are indeed unbiased however this is not the case for the diagonal. To
this end we define $D_t \in \R^{d \times d}$ as the diagonal matrix
compensating for the sampling bias on the diagonal elements of $\emp{X}_t^T
\emp{X}_t$
$$
D_t = \left(\frac{d}{\sk}-1\right) \cdot \text{diag} \left(X_t^T X_t \right)
$$
and the estimated bias from the observed data is
\begin{align} \label{equation:diag.bias}
\emp{D}_t = \left(1 - \frac{\sk}{d} \right) \cdot \text{diag} \left(\emp{X}_t^T \emp{X}_t \right).
\end{align}
Therefore, program~\eqref{algorithm:dantzig} can be viewed as Dantzig selector
with plug-in unbiased estimates for $X_t^T Y_t$ and $X_t^T X_t$ using limited
observed features.

\subsection{Sketch of Proof}
The main building block in proving Theorem~\ref{theorem:realizable} is stated in
Lemma~\ref{lemma:dantzig}. It proves that the sequence of solutions $\emp{w}_t$
converges to the optimal response $w^*$ based on which the signal $y_t$ is
created. More accurately, ignoring all second order terms, it shows that
$\|\emp{w}_t - w^*\|_1 \leq \O(1/\sqrt{t})$. In Lemma~\ref{lemma:sparse} we show
that the same applies for the sparse approximation $w_t$ of $\emp{w}_t$. Now,
since $\|x_t\|_\infty \leq 1$ we get that the difference between our response
$\ip{x_t}{w_t}$ and the (almost) optimal response $\ip{x_t}{w^*}$ is bounded by
$1/\sqrt{t}$. Given this, a careful calculation of the difference of losses
leads to a regret bound w.r.t.\ $w^*$. Specifically, an elementary analysis of
the loss expression leads to the equality
$$
\Regret_T(w^*) = \sum_{t=1}^T 2 \eta_t \ip{x_t}{w^* - w_t} + \left(\ip{x_t}{w^* - w_t} \right)^2 
$$
A bound on both summands can clearly be expressed in terms of $|\ip{x_t}{w^* -
w_t}| = \O(1/\sqrt{t})$. The right summand requires a martingale concentration
bound and the left is trivial. For both we obtain a bound of $\O(\log(T))$.

We are now left with two technicalities. The first is that $w^*$ is not
necessarily the empirically optimal response. To this end we provide, in
Lemma~\ref{lem:wstart_reg} in the appendix, a constant
(independent of $T$) bound on the regret of $w^*$ compared to the empirical
optimum. The second technicality is the fact that we do not solve for
$\emp{w}_t$ in every round, but in exponential gaps. This translates to an added
factor of $2$ to the bound $\|w_t - w^*\|_1$ that affects only the constants in
the $\O(\cdot)$ terms.

\begin{lemma}[Estimation Rates]
\label{lemma:dantzig}
Assume that the matrix $X_t \in \R^{t \times d}$ satisfies the RIP condition with $(\epsilon,3k)$ for some
$\epsilon < 1/5$. Let $\emp{w}_{n+1} \in \R^d$ be the optimal solution of program~\eqref{equation:conv.prog}.
With probability at least $1 - \delta$,
\begin{align*}
\norm{\emp{w}_{t+1} - w^*}_2 \le C \cdot \sqrt{ \frac{d}{\sk} \cdot \frac{k \log (d/\delta)}{t} } \left( \sigma + \frac{d}{\sk}  \right) \; , \\
\norm{\emp{w}_{t+1} - w^*}_1 \le C \cdot \sqrt{ \frac{d}{\sk} \frac{k^2 \log(d/\delta)}{t} } \left(\sigma + \frac{d}{\sk}  \right).
\end{align*}
Here $C>0$ is some universal constant and $\sigma$ is the standard deviation
of the noise.
\end{lemma}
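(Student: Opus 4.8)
The plan is to follow the standard Dantzig-selector template, adapted to the error-in-variables structure induced by the random feature sampling. Write $M_t = \tfrac{1}{t}(\emp{X}_t^T\emp{X}_t - \emp{D}_t)$ and $b_t = \tfrac{1}{t}\emp{X}_t^T Y_t$, so that the feasibility constraint of \eqref{equation:conv.prog} reads $\norm{b_t - M_t w}_\infty \le \lambda$ with $\lambda = C\sqrt{d\log(td/\delta)/(t\sk)}\,(\sigma + d/\sk)$. The diagonal correction $\emp{D}_t$ is designed precisely so that $\E[M_t] = \tfrac{1}{t}X_t^T X_t$ and $\E[b_t] = \tfrac{1}{t}X_t^T Y_t$; thus $M_t$ and $b_t$ are unbiased plug-in estimators of the true (unobserved) Gram matrix and cross-correlation. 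The proof then has four steps: (i) $w^*$ is feasible with probability at least $1-\delta$; (ii) $\ell_1$-minimality forces $h := \emp{w}_{t+1} - w^*$ into the Dantzig cone; (iii) $M_t$ inherits a restricted-eigenvalue property from the RIP of $X_t$; (iv) chaining these yields the $\ell_2$ and $\ell_1$ rates.

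For the feasibility step I would decompose, using $Y_t = X_t w^* + \eta$ and $\Xi := \emp{X}_t - X_t$,
\[
b_t - M_t w^* = \underbrace{\tfrac{1}{t}X_t^T\eta}_{\text{noise}} + \underbrace{\tfrac{1}{t}\Xi^T Y_t}_{\text{label sampling}} - \underbrace{\big(M_t - \tfrac{1}{t} X_t^T X_t\big)w^*}_{=:\,E_t\, w^*}.
\]
Each coordinate of each term is a martingale sum across rounds $s \le t$ (the randomness being the sampling sets $S_s$ and the noise $\eta_s$), so I would apply a Bernstein/Freedman martingale inequality coordinatewise and union bound over the $d$ coordinates. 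The first term is the classical Dantzig noise and contributes only $\sigma\sqrt{\log(d/\delta)/t}$. The two sampling terms are where the factor $d/\sk$ enters: each entry of $\emp{x}_s$ is either $0$ or as large as $(d/\sk)\,x_s(i)$, so the martingale increments have magnitude and conditional variance inflated by $d/\sk$; this yields the $\sqrt{d/\sk}$ multiplicative factor together with the additive $d/\sk$ inside $(\sigma + d/\sk)$. Choosing $C$ large enough makes $w^*$ feasible, whence $\norm{\emp{w}_{t+1}}_1 \le \norm{w^*}_1$.

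From minimality the usual argument gives the cone condition $\norm{h_{S^c}}_1 \le \norm{h_S}_1$ for $S = \text{supp}(w^*)$, $|S|\le k$, and feasibility of both $w^*$ and $\emp{w}_{t+1}$ gives $\norm{M_t h}_\infty \le 2\lambda$. Let $S_1$ index the $k$ largest coordinates of $h_{S^c}$ and split $S^c\setminus S_1$ into successive blocks $S_2, S_3,\dots$ of size $k$. By H\"older, $h_{S\cup S_1}^T M_t h \le \norm{h_{S\cup S_1}}_1\norm{M_t h}_\infty \le 2\sqrt{2k}\,\lambda\,\norm{h_{S\cup S_1}}_2$. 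For the matching lower bound I write $M_t = \tfrac{1}{t} X_t^T X_t + E_t$ and lower-bound the RIP part by the standard block/restricted-orthogonality chaining (which needs RIP $(\epsilon,3k)$ and $\epsilon<1/5$, since $|S\cup S_1|\le 2k$ and each cross term touches at most $3k$ coordinates), obtaining $h_{S\cup S_1}^T(\tfrac{1}{t} X_t^T X_t)h \ge c_0\norm{h_{S\cup S_1}}_2^2$. The error term is absorbed via $|h_{S\cup S_1}^T E_t h| \le 4k\,\norm{E_t}_{\max}\norm{h_{S\cup S_1}}_2^2$ (using the cone inequality $\norm{h}_1 \le 2\sqrt{k}\,\norm{h_{S\cup S_1}}_2$), where $\norm{E_t}_{\max}$ is controlled by the same martingale concentration as in step (i); for $t \ge t_0 = \widetilde{\O}(k)$ this makes $k\norm{E_t}_{\max}$ smaller than $c_0$, so $M_t$ satisfies a genuine restricted eigenvalue bound. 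Combining the two directions gives $\norm{h_{S\cup S_1}}_2 = \O(\sqrt{k}\,\lambda)$, and the block inequality $\sum_{j\ge2}\norm{h_{S_j}}_2 \le k^{-1/2}\norm{h_{S^c}}_1 \le \norm{h_{S\cup S_1}}_2$ yields $\norm{h}_2 \le 2\norm{h_{S\cup S_1}}_2 = \O(\sqrt{k}\,\lambda)$ and then $\norm{h}_1 \le 2\sqrt{k}\,\norm{h}_2 = \O(k\,\lambda)$. Substituting the value of $\lambda$ reproduces both displayed rates.

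The main obstacle is the probabilistic control of the two sampling-error terms, and especially of the Gram-estimation error $E_t$, which is a sum over rounds of quadratic forms in the correlated sampling indicators with variance scaled by $d/\sk$. Obtaining the sharp dependence on $d/\sk$ (rather than a crude union bound) requires a careful martingale Bernstein argument that simultaneously tracks the inflated increments and the inflated conditional variances; this is exactly what pins down the value of $\lambda$ in \eqref{equation:conv.prog} and the threshold $t_0$ needed for $M_t$ to be restricted positive definite. Everything else is the deterministic Dantzig cone analysis.
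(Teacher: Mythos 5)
Your overall template---feasibility of $w^*$ with probability $1-\delta$, the $\ell_1$-minimality cone condition $\norm{h_{S^c}}_1 \le \norm{h_S}_1$, the block decomposition of $S^c$ into size-$k$ pieces, restricted orthogonality requiring RIP $(\epsilon,3k)$ with $\epsilon<1/5$, and the final steps $\norm{h}_2 \le \sqrt{2}\,\norm{h_{S\cup S_1}}_2$ and $\norm{h}_1 \le 2\sqrt{k}\,\norm{h}_2$---matches the paper's proof, and your feasibility step (coordinatewise martingale Bernstein bounds on $\frac{1}{t}X_t^T\eta$, $\frac{1}{t}\Xi^T Y_t$ and $E_t w^*$, with increments and conditional variances inflated by $d/\sk$) is exactly the content of the paper's Lemmas~\ref{lemma:localization} and~\ref{lemma:concentration-inequality}. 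Where you genuinely diverge is the interface between the probabilistic and deterministic parts. You keep the constraint in terms of the surrogate Gram matrix $M_t = \frac{1}{t}(\emp{X}_t^T\emp{X}_t - \emp{D}_t)$ and prove a restricted-eigenvalue property for $M_t$ itself, absorbing the Gram error via $|h_{S\cup S_1}^T E_t h| \le 4k\norm{E_t}_{\max}\norm{h_{S\cup S_1}}_2^2$---the standard error-in-variables route. The paper never needs $M_t$ to be restricted positive definite: Lemma~\ref{lemma:localization} transfers the $\ell_\infty$ constraint to the \emph{true} Gram matrix, bounding $\norm{\frac{1}{t}X_t^TX_t \Delta}_\infty$ by pushing each error matrix against $\norm{\Delta}_1 \le \norm{\emp{w}}_1 + \norm{w^*}_1 \le 2\norm{w^*}_1 \le 2$ (a constant, by optimality of $\emp{w}$), and then runs a purely deterministic Candes--Tao argument on $X_t$ alone (projection $P_V$ onto the span of $X_{S_{01}}$ plus Lemma 1.1 of \cite{candes2005decoding}), which satisfies RIP by hypothesis. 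The paper's trick keeps all stochastic error terms \emph{linear} in a bounded $\ell_1$ norm, rather than quadratic in $\norm{h_{S\cup S_1}}_2$, so nothing needs to be absorbed into an eigenvalue constant.

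This structural difference has a quantitative consequence that you understate. Your absorption step needs $k\norm{E_t}_{\max} \lesssim 1$; since the dominant entrywise error is the quadratic sampling term, $\norm{E_t}_{\max} = \O\bigl(\sqrt{d^3\log(d/\delta)/(t\sk^3)}\bigr)$, this requires $t \gtrsim k^2 (d/\sk)^3 \log(d/\delta)$, i.e. $t_0 = \widetilde{\O}(k^2(d/\sk)^3)$, not $\widetilde{\O}(k)$ as you claim (the paper's $\widetilde{\O}$ hides only polylogarithmic factors). The lemma as stated carries no such hypothesis, and the paper's proof needs only the much milder implicit condition from Lemma~\ref{lemma:concentration-inequality} that the $1/t$ Bernstein terms be dominated, roughly $t \gtrsim (d/\sk)\log(d/\delta)$. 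So your argument proves a version of the lemma that is conditional on a substantially larger burn-in. Downstream this happens to be harmless: playing $w_t = 0$ for $t \le t_0$ contributes $\O(t_0)$ regret, which is of the same order as the main $k^2(d/\sk)^3\log T$ term in Theorem~\ref{theorem:realizable}. But as a standalone statement your lemma is weaker than the paper's, and the $t_0$ bookkeeping should be corrected.
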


Note the $\emp{w}_t$ may not be sparse; it can have many non-zero coordinates
that are small in absolute value. However, we take the top $k$ coordinates of
$\emp{w}_t$ in absolute value. Thanks to the Lemma~\ref{lemma:sparse} below,
we lose only a constant factor $\sqrt{3}$.

\begin{lemma}
\label{lemma:sparse}
Let $\emp{w} \in \R^d$ be an arbitrary vector and let $w^* \in \R^d$ be a
$k$-sparse vector. Let $\widetilde{S} \subseteq [d]$ be the top $k$
coordinates of $\emp{w}$ in absolute value. Then,
$$
\norm{\emp{w}(\widetilde{S}) - w^*}_2 \le \sqrt{3} \norm{\emp{w} - w^*}_2.
$$
\end{lemma}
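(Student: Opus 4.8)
The plan is to compare the two squared errors coordinate by coordinate after partitioning the index set according to membership in the support of $w^*$ and in $\widetilde{S}$. Write $S^* = \{i : w^*(i) \neq 0\}$, so $|S^*| \le k$, and recall that $|\widetilde{S}| = k$. Partition $[d]$ into the four groups $A = \widetilde{S} \cap S^*$, $B = \widetilde{S} \setminus S^*$, $C = S^* \setminus \widetilde{S}$, and $D = [d] \setminus (\widetilde{S} \cup S^*)$. The inequality I will lean on most heavily is a counting fact: since $|A| + |B| = |\widetilde{S}| = k \ge |S^*| = |A| + |C|$, we have $|C| \le |B|$.

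Next I would expand both norms over these groups. On $A \cup B$ the vectors $\emp{w}(\widetilde{S})$ and $\emp{w}$ coincide, so those groups contribute identically to the two errors; the differences occur only on $C$ (where $\emp{w}(\widetilde{S})$ is zeroed while $w^*$ is nonzero) and on $D$ (where $\emp{w}$ is discarded). Concretely,
$$
\norm{\emp{w}(\widetilde{S}) - w^*}_2^2 = \norm{(\emp{w} - w^*)(A)}_2^2 + \norm{\emp{w}(B)}_2^2 + \norm{w^*(C)}_2^2 ,
$$
whereas $\norm{\emp{w} - w^*}_2^2$ has the same first two terms plus $\norm{(\emp{w} - w^*)(C)}_2^2 + \norm{\emp{w}(D)}_2^2$. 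Thus the whole argument reduces to controlling the single extra term $\norm{w^*(C)}_2^2$, the energy of the part of $w^*$ that top-$k$ truncation throws away.

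To bound that term I would split $w^* = (w^* - \emp{w}) + \emp{w}$ on $C$ and use the triangle inequality, obtaining $\norm{w^*(C)}_2 \le \norm{(\emp{w} - w^*)(C)}_2 + \norm{\emp{w}(C)}_2$. The crux of the lemma is the second piece: I claim $\norm{\emp{w}(C)}_2 \le \norm{\emp{w}(B)}_2$, and this is where the defining property of $\widetilde{S}$ enters. Every retained coordinate (in particular every $i \in B$) has absolute value at least that of every discarded coordinate (in particular every $i \in C$); combined with $|C| \le |B|$, summing the $|C|$ largest $B$-values dominates the sum over $C$, so $\sum_{i \in C} \emp{w}(i)^2 \le \sum_{i \in B} \emp{w}(i)^2$. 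I expect this comparison to be the main obstacle, since it is the only step that is not pure algebra and the only place the sorting (and tie-breaking) definition of $\widetilde{S}$ must be invoked carefully.

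Finally I would assemble the pieces. Applying $(a+b)^2 \le 2a^2 + 2b^2$ to the triangle inequality and then the key bound gives $\norm{w^*(C)}_2^2 \le 2\norm{(\emp{w} - w^*)(C)}_2^2 + 2\norm{\emp{w}(B)}_2^2$. Substituting into the expansion of $\norm{\emp{w}(\widetilde{S}) - w^*}_2^2$ and subtracting from $3\norm{\emp{w} - w^*}_2^2$, all the shared terms cancel and the remaining difference is the manifestly nonnegative sum of squares $2\norm{(\emp{w} - w^*)(A)}_2^2 + \norm{(\emp{w} - w^*)(C)}_2^2 + 3\norm{\emp{w}(D)}_2^2$. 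Hence $\norm{\emp{w}(\widetilde{S}) - w^*}_2^2 \le 3\norm{\emp{w} - w^*}_2^2$, and taking square roots yields the claimed constant $\sqrt{3}$.
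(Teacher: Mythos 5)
Your proof is correct and takes essentially the same route as the paper's: the same partition of the coordinates (the paper's $S \cap \widetilde{S}$, $\widetilde{S} \setminus S$, $S \setminus \widetilde{S}$ are your $A$, $B$, $C$), the same application of $(a+b)^2 \le 2a^2 + 2b^2$ to the discarded mass $\norm{w^*(C)}_2^2$, and the same key top-$k$ comparison $\sum_{i \in C} \emp{w}(i)^2 \le \sum_{i \in B} \emp{w}(i)^2$. Your cardinality argument $|C| \le |B|$ is in fact slightly more careful than the paper's assertion of equality $|S \setminus \widetilde{S}| = |\widetilde{S} \setminus S|$ (which only holds when $\norm{w^*}_0 = k$ exactly), but this is a cosmetic refinement, not a different proof.
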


\section{Agnostic Setting}
\label{section:agnostic}

In this section we focus on the agnostic setting, where we don't impose any
distributional assumption on the sequence. In this setting, there is no ``true''
sparse model, but the learner --- with limited access to features --- is
competing with the best $k$-sparse model defined using full information $\left\{
(x_t, y_t) \right\}_{t=1}^T$.

As before, we do assume that $x_t$ and $y_t$ are bounded. Without loss of
generality, $\norm{x_t}_\infty \le 1$, and $|y_t| \le 1$ for all $t$. Once
again, without any regularity condition on the design matrix,
\citet{foster2016online} have shown that achieving a sub-linear regret
$\O(T^{1-\delta})$ is in general computationally hard, for any constant
$\delta>0$ unless $\text{NP} \subseteq \text{BPP}$.

We give an efficient algorithm that achieves sub-linear regret under the assumption that the design matrix of any (sufficiently long) block of consecutive data points has bounded \emph{restricted condition number},
which we define below:

\begin{definition}[Restricted Condition Number] \label{definition:restricted-condition-number}
Let $k \in \mathbb{N}$ be a sparsity parameter. The {\em restricted condition
number for sparsity $k$} of a matrix $X \in \R^{n \times d}$ is defined
as
\[ \sup_{\substack{v, w:\ \|v\| = \|w\| = 1, \\ \|v\|_0, \|w\|_0 \leq k}} \frac{\|Xv\|}{\|Xw\|}. \]
\end{definition}

It is easy to see that if a matrix $X$ satisfies RIP with parameters $(\epsilon,
k)$, then its restricted condition number for sparsity $k$ is at most
$\frac{1+\epsilon}{1-\epsilon}$. Thus, having bounded restricted condition
number is a weaker requirement than RIP.

We now define the {\em Block Bounded Restricted Condition Number Property} (BBRCNP):

\begin{definition}[Block Bounded Restricted Condition Number Property]
\label{definition:block-BRCNP}
Let $\kappa > 0$ and $k \in \mathbb{N}$. A sequence of feature vectors $x_1,
x_2, \ldots, x_T$ satisfies {\em BBRCNP} with parameters $(\kappa, K)$ if there
is a constant $t_0$ such that for any sequence of consecutive time steps
$\mc{T}$ with $|\mc{T}| \geq t_0$, the restricted condition number for sparsity
$k$ of $X$, the design matrix of the feature vectors $x_t$ for $t \in \mc{T}$,
is at most $\kappa$.
\end{definition}
Note that in the random design setting where $x_t$, for $t \in [T]$, are isotropic sub-Gaussian vectors, $t_0 = O(\log T + k \log d)$ suffices to satisfy BBRCNP with high probability, where the $O(\cdot)$ notation hides a constant depending on $\kappa$.

We assume in this section that the sequence of feature vectors satisfies BBRCNP
with parameters $(\kappa, K)$ for some $K = \O(k \log(T))$ to be defined in the
course of the analysis.

\subsection{Algorithm}
\label{section:adv.alg}

The algorithm in the agnostic setting is of distinct nature from that in the
stochastic setting. Our algorithm is motivated from literature on maximization
of sub-modular set function
\citep{natarajan1995sparse,golovin-streeter,boutsidis2015greedy}. Though the
problem being NP-hard, greedy algorithm on sub-modular maximization provides
provable good approximation ratio. Specifically, \cite{golovin-streeter}
considered online optimization of super/sub-modular set functions using expert
algorithm as sub-routine. \cite{natarajan1995sparse,boutsidis2015greedy} cast
the sparse linear regression as maximization of weakly supermodular function.
We will introduce an algorithm that blends various ideas from referred
literature, to attack the online sparse regression with limited features.

First, let's introduce the notion of a weakly supermodular function.
\begin{definition}
\label{def: weak-sup-modu}
For parameters $k \in \mathbb{N}$ and $\alpha\geq 1$, a set function $g: [d]
\rightarrow \R$ is $(k, \alpha)$-weakly supermodular if for any two sets
$S\subseteq T \subseteq [d]$ with $|T| \leq k$, the following two inequalities
hold:
\begin{enumerate}
	\item {\bf (monotonicity)} $g(T) \leq g(S)$, and
	\item {\bf (approximately decreasing marginal gain)}
	\[g(S) - g(T) \leq \alpha \sum_{i \in T \backslash S}[g(S) - g(S \cup \{i\})].\]
\end{enumerate}
\end{definition}
The definition is slightly stronger than that in \cite{boutsidis2015greedy}. We
will show that sparse linear regression can be viewed as weakly supermodular
minimization in Definition~\ref{def: weak-sup-modu} once the design matrix
has bounded restricted condition number.

Now we outline the algorithm (see Algorithm~\ref{algorithm:sup-modu}). We divide
the rounds $1, 2,\ldots, T$ into mini-batches of size $B$ each (so there are
$T/B$ such batches). The $b$-th batch thus consists of the examples $(x_t, y_t)$
for $t \in \mc{T}_b := \{(b-1)B + 1, (b-1)B + 1, \ldots, bB\}$. Within the
$b$-th batch, our algorithm queries the same subset of features of size at most
$\sk$.

The algorithm consists of few key steps. First, one can show that under BBRCNP, as long as $B$ is large enough, the loss within batch $b$ defines a weakly supermodular
set function
$$
g_t(S) = \frac{1}{B} \inf\limits_{w \in \R^S}\sum\limits_{t \in \mc{T}_b} (y_t - \ip{x_t}{w})^2.
$$
Therefore, we can formulate the original online sparse regression problem into
online weakly supermodular minimization problem. For the latter problem, we
develop an online greedy algorithm along the lines of \citep{golovin-streeter}.
We employ $\wk = \O^*(k)$ budgeted experts algorithms \citep{AKTT}, denoted
BEXP, with budget parameter\footnote{We assume, for convenience, that $\sk$ is
divisible by $\wk$.} $\frac{\sk}{\wk}$. The precise characteristics of BEXP are
given in Theorem~\ref{theorem:bexp} (adapted from Theorem 2 in \citep{AKTT}).

\begin{theorem} \label{theorem:bexp}
For the problem of prediction from expert advice, let there be $d$ experts, and
let $k \in [d]$ be a budget parameter. In each prediction round $t$, the BEXP
algorithm chooses an expert $j_t$ and a set of experts $U_t$ containing $j_t$ of
size at most $k$, obtains as feedback the losses of all the experts in $U_t$,
suffers the loss of expert $j_t$, and guarantees an expected regret bound of
$2\sqrt{\frac{d\log(d)}{k}T}$ over $T$ prediction rounds.
\end{theorem}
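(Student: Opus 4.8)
The plan is to recognize BEXP as the exponential-weights (Hedge) algorithm run on importance-weighted, unbiased estimates of the full loss vector, where the per-round budget of $k$ observations is used both to \emph{follow} a sampled expert and to \emph{explore} enough coordinates to keep the estimator's variance small. Throughout I assume, as is standard for prediction from expert advice, that the loss vector $\ell_t \in [0,1]^d$ in each round $t$. First I would specify the state and the sampling. The algorithm maintains $p_t(i) \propto \exp(-\eta \sum_{s<t}\emp{\ell}_s(i))$ for a learning rate $\eta>0$ to be tuned, where $\emp{\ell}_s$ is the estimated loss vector defined below. In round $t$ it draws the expert to follow $j_t \sim p_t$ and then completes $U_t$ to a set of size at most $k$ containing $j_t$ by sampling the remaining coordinates so that every coordinate $i$ is observed with a \emph{known} marginal probability $q_t(i) := \Pr[i \in U_t] \ge k/d$ (for instance, $j_t$ together with a uniformly random $(k-1)$-subset of the remaining coordinates; the precise scheme and constants are those of \citet{AKTT}). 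Since $j_t \sim p_t$ marginally, the expected suffered loss in round $t$ equals $\ip{p_t}{\ell_t}$.

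Second, I would introduce the estimator $\emp{\ell}_t(i) = \frac{\ell_t(i)}{q_t(i)}\indicator{i \in U_t}$ and verify its two properties. Conditionally on the past, $\E[\emp{\ell}_t(i)] = \ell_t(i)$, so $\emp{\ell}_t$ is a nonnegative unbiased estimate of $\ell_t$; and the crucial variance estimate is
\[
\E\left[\sum_i p_t(i)\,\emp{\ell}_t(i)^2\right] = \sum_i p_t(i)\frac{\ell_t(i)^2}{q_t(i)} \le \sum_i \frac{p_t(i)}{q_t(i)} \le \frac{d}{k},
\]
using $\ell_t(i)^2\le 1$, $q_t(i)\ge k/d$, and $\sum_i p_t(i)=1$. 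This is exactly the point at which observing $k$ experts rather than one buys a factor-$k$ variance reduction, and hence the $\sqrt{k}$ improvement over the bandit (single-observation) rate.

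Third, the standard Hedge potential argument applied to the nonnegative losses $\emp{\ell}_t$ (telescoping $\log(\Phi_{t+1}/\Phi_t)$ and using $e^{-x}\le 1-x+x^2/2$ for $x\ge 0$) gives, for every fixed expert $i^*$,
\[
\sum_{t=1}^T \ip{p_t}{\emp{\ell}_t} - \sum_{t=1}^T \emp{\ell}_t(i^*) \le \frac{\log d}{\eta} + \frac{\eta}{2}\sum_{t=1}^T \sum_i p_t(i)\,\emp{\ell}_t(i)^2.
\]
Taking expectations, unbiasedness turns the left-hand side into $\E[\sum_t \ip{p_t}{\ell_t}] - \sum_t \ell_t(i^*)$, which is precisely the expected regret of BEXP against expert $i^*$ because $\E[\ell_t(j_t)]=\ip{p_t}{\ell_t}$, while the variance bound caps the right-hand side at $\frac{\log d}{\eta} + \frac{\eta}{2}\cdot\frac{d}{k}T$. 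Choosing $i^*$ to be the best expert in hindsight and optimizing $\eta = \sqrt{2k\log(d)/(dT)}$ yields expected regret at most $\sqrt{2dT\log(d)/k} \le 2\sqrt{\frac{d\log(d)}{k}T}$.

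I expect the main obstacle to be the sampling step: constructing a single distribution over observation sets $U_t$ that simultaneously (i) makes the followed expert $j_t$ marginally distributed as $p_t$, (ii) respects the hard budget $|U_t|\le k$, and (iii) guarantees explicitly computable inclusion probabilities $q_t(i)$ bounded below by a constant multiple of $k/d$ --- all three are needed at once for the estimator to be both unbiased and low-variance. The remaining ingredients (the Hedge inequality and the tuning of $\eta$) are routine; the only quantitatively decisive estimate is the variance bound $\E[\sum_i p_t(i)\emp{\ell}_t(i)^2]\le d/k$, which is what drives the final rate and the constant in the statement.
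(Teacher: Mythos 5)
This theorem is never proved in the paper: it is imported, with the authors noting it is adapted from Theorem~2 of \citep{AKTT}, so there is no internal proof to compare against. Your reconstruction is, structurally, exactly the argument behind BEXP in that source: exponential weights run on importance-weighted unbiased loss estimates, with the budget $k$ spent on exploration so that the second-moment term $\E\left[\sum_i p_t(i)\,\emp{\ell}_t(i)^2\right]$ drops from order $d$ (the Exp3/bandit case) to order $d/k$, followed by the routine tuning of $\eta$; the Hedge inequality you invoke is valid here since $e^{-x} \le 1 - x + x^2/2$ holds for all $x \ge 0$ and the estimates are nonnegative (though possibly larger than $1$), and the expectation step via unbiasedness and measurability of $p_t$ is sound for the oblivious-adversary regret claimed.

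One quantitative slip deserves correction. For the sampling scheme you propose ($j_t \sim p_t$ completed by a uniformly random $(k-1)$-subset of the remaining coordinates), the inclusion probability is $q_t(i) = p_t(i) + (1-p_t(i))\frac{k-1}{d-1}$, and this is \emph{not} bounded below by $k/d$: e.g.\ with $d=4$, $k=2$, $p_t(i)=0$ one gets $q_t(i) = 1/3 < 1/2$. The correct lower bound is $q_t(i) \ge \max\left\{p_t(i), \frac{k-1}{d-1}\right\}$, which is harmless: for $k \ge 2$ it gives $\sum_i p_t(i)/q_t(i) \le \frac{d-1}{k-1} \le \frac{2d}{k}$, and for $k=1$ the bound $q_t(i) \ge p_t(i)$ gives $\sum_i p_t(i)/q_t(i) \le d = d/k$. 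Re-optimizing $\eta$ with the variance bound $2d/k$ yields expected regret at most $2\sqrt{\frac{d\log(d)}{k}T}$ --- which in fact reproduces exactly the constant $2$ in the statement, whereas your claimed (but unjustified) bound $d/k$ would have improved it to $\sqrt{2}$. So the proof is correct in approach and, after this two-line patch, in constants as well.
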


At the beginning of each mini-batch $b$, the BEXP algorithms are run. Each BEXP
algorithm outputs a set of coordinates of size $\frac{\sk}{\wk}$ as well as a
special coordinate in that set. The union of all of these sets is then used as
the set of features to query throughout the subsequent mini-batch. Within the
mini-batch, the algorithm runs the standard Vovk-Azoury-Warmuth algorithm for
linear prediction with square loss {\em restricted} to set of special
coordinates output by all the BEXP algorithms.

At the end of the mini-batch, every BEXP algorithm is provided carefully
constructed losses for each coordinate that was output as feedback. These losses
ensure that the set of special coordinates chosen by the BEXP algorithms mimic
the greedy algorithm for weakly supermodular minimization.

\def\bexp{\textsf{BEXP}}
\def\vaw{\textsf{VAW}}
\begin{algorithm}[ht]
\begin{algorithmic}[1]
\REQUIRE Mini-batch size $B$, sparsity parameters $\sk$ and $\wk$

\STATE Set up $\wk$ budgeted prediction algorithms $\bexp^{(i)}$ for $i \in [\wk]$, each using the coordinates in $[d]$ as ``experts'' with a per-round budget of $\frac{\sk}{\wk}$.

\FOR{$b = 1,2,\ldots, T/B$}
\STATE For each $i \in [\wk]$, obtain a coordinate $j^{(i)}_b$ and subset of coordinates $U^{(i)}_b$ from $\bexp^{(i)}$ such that $j^{(i)}_b \in U^{(i)}_b$.
\STATE Define $V_b^{(0)} = \emptyset$ and for each $i \in [\wk]$ define $V^{(i)}_b = \{j^{(i')}_b\ |\ i' \leq i\}$.
\STATE Set up the Vovk-Azoury-Warmuth ($\vaw$) algorithm for predicting using the features in $V^{(\wk)}_b$.

\FOR{$t \in \mc{T}_b$}
\STATE Set $S_t = \bigcup_{i \in [\wk]} U^{(i)}_b$, obtain $x_t(S_t)$, and pass $x_t(V^{(\wk)}_b)$ to $\vaw$.
\STATE Set $w_t$ to be the weight vector output by $\vaw$.
\STATE Obtain the true label $y_t$ and pass it to $\vaw$.
\ENDFOR

\STATE Define the function
\begin{align}
\label{equation:gt}
g_b(S) = \frac{1}{B} \inf\limits_{w \in \R^S}\sum\limits_{t \in \mc{T}_b} (y_t - \ip{x_t}{w})^2.
\end{align}

\STATE For each $j \in U^{(i)}_b$, compute $g_b(V^{(i-1)}_b \cup \{j\})$ and pass it $\bexp^{(i)}$ as the loss for expert $j$.
\ENDFOR
\end{algorithmic}
\caption{Online Greedy Algorithm for POSLR}
\label{algorithm:sup-modu}
\end{algorithm}

\subsection{Main Result}
\label{section:adv.main}

In this section, we will show that Algorithm~\ref{algorithm:sup-modu} achieves
sublinear regret under BBRCNP.

\begin{theorem}
\label{theorem:adv}
Suppose the sequence of feature vectors satisfies BBRCNP with parameters
$(\kappa, \wk + k)$ for $\wk = \frac{1}{3}\kappa^2 k\log(T)$, and assume that $T$ is large enough so that $t_0 \leq (\frac{\sk T}{\kappa^2 d k})^{1/3}$. Then if Algorithm~\ref{algorithm:sup-modu} is run with parameters $B = (\frac{\sk T}{\kappa^2 d k})^{1/3}$ and
$\wk$ as specified above, its expected regret is at most
$\tilde{O}((\frac{\kappa^8 dk^4}{\sk})^{1/3}T^{2/3})$.
\end{theorem}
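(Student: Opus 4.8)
The plan is to bound the regret by decomposing it into the three nested components of Algorithm~\ref{algorithm:sup-modu}: the within-batch linear prediction by \vaw, the online selection of the ``special'' coordinates $V^{(\wk)}_b$ by the \bexp\ algorithms, and the greedy approximation error incurred by running only $\wk$ greedy steps per batch. I would first reduce the comparator: writing $\mathrm{OPT}_b = \min_{|S|\le k} g_b(S)$, note that for any fixed $k$-sparse $w$ with support $S$, restricting $w$ to each batch separately can only lower the loss, so $\sum_b B\cdot \mathrm{OPT}_b \le \min_{\norm{w}_0 \le k}\sum_t (y_t - \ip{x_t}{w})^2$. Hence it suffices to control $\sum_t (y_t - \ip{x_t}{w_t})^2 - \sum_b B\cdot \mathrm{OPT}_b$, and the three error sources above compose additively into this quantity.

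The structural heart of the argument is to show that, under BBRCNP with parameters $(\kappa, \wk+k)$ and for batch length $B \ge t_0$, the set function $g_b$ is $(\wk+k,\kappa^2)$-weakly supermodular in the sense of Definition~\ref{def: weak-sup-modu}; this is where the restricted condition number enters, following \cite{boutsidis2015greedy, natarajan1995sparse}. Monotonicity is immediate since enlarging $S$ enlarges the feasible set in the infimum, and the approximately-decreasing-marginal-gain inequality follows by expressing the marginal decrease of $g_b$ through the restricted singular values of the batch design matrix, whose ratios are pinched by $\kappa$. Granting this, the offline greedy rule that repeatedly adds the coordinate of largest marginal loss decrease satisfies, after $\wk$ steps, $g_b(V^{(\wk)}_b) - \mathrm{OPT}_b \le e^{-\wk/(\kappa^2 k)}\,(g_b(\emptyset) - \mathrm{OPT}_b) \le e^{-\wk/(\kappa^2 k)}$, using $g_b(\emptyset) \le 1$ from $|y_t|\le 1$. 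With $\wk = \frac13\kappa^2 k\log T$ this greedy error is at most $T^{-1/3}$ per batch, contributing $B\cdot(T/B)\cdot T^{-1/3} = O(T^{2/3})$ overall.

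Next I would lift the offline greedy guarantee to the online setting via the technique of \citet{golovin-streeter}: the $i$-th greedy step is simulated by $\bexp^{(i)}$, whose loss for a candidate coordinate $j$ is exactly the marginal value $g_b(V^{(i-1)}_b \cup \{j\})$ fed back at the end of batch $b$. A per-step telescoping argument combined with weak supermodularity yields $\sum_b [g_b(V^{(\wk)}_b) - \mathrm{OPT}_b] \le (T/B)\,e^{-\wk/(\kappa^2 k)} + \sum_{i\in[\wk]} R_i$, where $R_i$ is the meta-regret of $\bexp^{(i)}$ over the $T/B$ batches. The limited feature budget is precisely what forces the \emph{budgeted} experts algorithm: each $\bexp^{(i)}$ observes the losses of only its $\sk/\wk$ queried coordinates, and by Theorem~\ref{theorem:bexp} (with losses in $[0,1]$) satisfies $R_i \le 2\sqrt{\tfrac{d\wk\log d}{\sk}\cdot\tfrac{T}{B}}$. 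Within each batch, the \vaw\ regret against the best linear predictor on the $\wk$ special coordinates is $\tilde{O}(\wk)$ under the boundedness of features and labels, so the total \vaw\ contribution is $\tilde{O}(\wk\, T/B)$.

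Assembling the pieces and rescaling the $g_b$-scale terms by $B$, the regret is bounded by $O(T^{2/3}) + B\sum_i R_i + \tilde O(\wk T/B)$, i.e.\ a \bexp\ term of order $\wk\sqrt{\tfrac{d\wk\log d}{\sk}}\sqrt{BT}$ that grows in $B$ and a \vaw\ term of order $\wk T/B$ that shrinks in $B$. The choice $B = (\tfrac{\sk T}{\kappa^2 dk})^{1/3}$ balances these two, and substituting $\wk = \frac13\kappa^2 k\log T$ drives every term to order $\tilde O((\tfrac{\kappa^8 dk^4}{\sk})^{1/3}T^{2/3})$, the claimed bound; the hypothesis $t_0 \le (\tfrac{\sk T}{\kappa^2 dk})^{1/3}$ is exactly what guarantees $B \ge t_0$ so that BBRCNP applies on every batch. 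I expect the main obstacle to be the online greedy reduction of paragraph three: one must verify that feeding the marginal losses $g_b(V^{(i-1)}_b\cup\{j\})$ to the budgeted experts reproduces the offline greedy trajectory up to the additive regret $\sum_i R_i$, while simultaneously reconciling the three different ``currencies'' in which loss is measured—the averaged supermodular value $g_b$, the per-batch total loss that \vaw\ controls, and the meta-round losses seen by \bexp—so that the scalings by $B$ and $T/B$ compose correctly.
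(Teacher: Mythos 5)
Your overall architecture --- weak supermodularity of $g_b$ via the restricted condition number, an online greedy simulation by $\wk$ budgeted-experts instances, a per-batch \textsf{VAW} term, and the balancing choice of $B$ --- is the paper's, but your third paragraph contains a genuine error: you state the online greedy guarantee against the \emph{per-batch} optima, $\sum_b [g_b(V_b^{(\wk)}) - \mathrm{OPT}_b] \leq (T/B)\,e^{-\wk/(\kappa^2 k)} + \sum_i R_i$ with $\mathrm{OPT}_b = \min_{|S| \le k} g_b(S)$, and this inequality is false in general. The regret guarantee of each $\textsf{BEXP}^{(i)}$ (Theorem~\ref{theorem:bexp}) is against a single fixed expert held across all $T/B$ meta-rounds, and correspondingly the greedy contraction (Lemma~\ref{lemma:greedy-modular}) must be applied to the \emph{aggregated} function $\sum_b g_b^{(i-1)}$ --- which is weakly supermodular as a sum of weakly supermodular functions, using Lemma~\ref{lemma:union-modular} --- with one comparator set $V$ fixed across batches. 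Competing with $\sum_b \mathrm{OPT}_b$ is a shifting-comparator guarantee that no online algorithm can achieve: take $k=1$, isotropic random features (so BBRCNP holds, since it constrains only the design matrix, not the labels), and let $y_t$ within batch $b$ track a different coordinate $j_b$ in each batch; then $\mathrm{OPT}_b \approx 0$ for every $b$, while any online choice of coordinates incurs $\Omega(1)$ per batch on the $g_b$ scale, i.e.\ $\Omega(T)$ after rescaling by $B$, contradicting your claimed $\tilde{O}(T^{2/3})$. The same flaw infects your first-paragraph reduction: lower-bounding the comparator loss by $\sum_b B\,\mathrm{OPT}_b$ is valid but makes the resulting target unachievable; the correct reduction fixes a support set $V$ of size at most $k$ and uses $\sum_b B\,g_b(V) \le \inf_{w \in \R^V} \sum_{t=1}^T (y_t - \ip{x_t}{w})^2$, which holds uniformly over all such $V$ and is what the paper does.

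The fix is local and is exactly the paper's Lemma~\ref{lem:online-greedy}: for fixed $V$ one proves the recursion $\E[\sum_b g_b^{(i)}(\emptyset) - g_b(V)] \le (1 - \tfrac{1}{\kappa^2 |V|})\,\E[\sum_b g_b^{(i-1)}(\emptyset) - g_b(V)] + 2\sqrt{d\wk\log(d)T/(\sk B)}$ and unrolls it over $i \in [\wk]$; everything downstream of this correction in your write-up (the $T e^{-\wk/(\kappa^2 k)}$ greedy-truncation term, the \textsf{VAW} term $\tilde{O}(\wk T/B)$, the balancing of the two $B$-dependent terms, and the role of the hypothesis $t_0 \le B$ in making BBRCNP apply to every batch) then matches the paper's proof. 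One cosmetic remark: bounding the accumulated meta-regret by $\sum_{i \in [\wk]} R_i \approx \wk \cdot 2\sqrt{d\wk\log(d)T/(\sk B)}$ is coarser than the paper's geometric-series bound $2\kappa^2 |V| \sqrt{d\wk\log(d)T/(\sk B)}$ obtained by carrying the contraction factors through the recursion; this costs you an extra $\log T$ factor, which the final $\tilde{O}(\cdot)$ absorbs.
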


\begin{proof}
The proof relies on a number of lemmas whose proofs can be found in
the appendix. We begin with the connection between
sparse linear regression, weakly supermodular function and RIP, formally stated
in Lemma~\ref{lemma:RIP.modu}. This lemma is a direct consequence of Lemma 5 in
\citep{boutsidis2015greedy}.

\begin{lemma}
\label{lemma:RIP.modu}
Consider a sequence of examples $(x_t, y_t) \in \R^d \times \R$
for $t = 1, 2, \ldots, B$, and let $X$ be the design matrix for the sequence.
Consider the set function associated with least squares optimization:
\begin{align*}
g(S) &= \inf_{w \in \R^S}~ \frac{1}{B}\sum_{t=1}^B ( y_t - \ip{x_t}{w} )^2.
\end{align*}
Suppose the restricted condition number of $X$ for sparsity $k$ is bounded by $\kappa$.
Then $g(S)$ is $(k, \kappa^2)$-weakly supermodular.
\end{lemma}

Even though minimization of weakly supermodular functions is NP-hard, the greedy
algorithm provides a good approximation, as shown in the next lemma.

\begin{lemma}
\label{lemma:greedy-modular}
Consider a $(k, \alpha)$-weakly supermodular set function $g(\cdot)$. Let $j^* := \argmin_{j} g(\{j\})$. Then, for any subset $V$ of size at most $k$, we have
\begin{align*}
g(\{j^*\}) - g(V) \leq \left(1 - \tfrac{1}{\alpha |V|} \right) [g(\emptyset) - g(V)].
\end{align*}
\end{lemma}

The BEXP algorithms essentially implement the greedy algorithm in an online
fashion. Using the properties of the BEXP algorithm, we have the following
regret guarantee:
\begin{lemma}\label{lem:online-greedy}
	Suppose the sequence of feature vectors satisfies BBRCNP with parameters $(\epsilon, \wk + k)$. Then for any set $V$ of coordinates of size at most $k$, we have
	\begin{align*}
		& \E\left[\sum_{b = 1}^{T/B} g_b(V_b^{(\wk)}) - g_b(V)\right] \\
		&\leq \sum_{b = 1}^{T/B} \left(1 - \tfrac{1}{\kappa^2 |V|}\right)^\wk[g_b(\emptyset) - g_b(V)] + 2\kappa^2 k \sqrt{\tfrac{d\wk\log(d)T}{\sk B}}.
	\end{align*}
\end{lemma}

Finally, within every mini-batch, the VAW algorithm guarantees the following
regret bound, an immediate consequence of Theorem 11.8 in
\citet{cesa2006prediction}:
\begin{lemma}\label{lem:vaw}
Within every batch $b$, the VAW algorithm generates weight vectors $w_t$ for $t \in \mc{T}_b$ such that
\[ \sum_{t \in \mc{T}_b} (y_t - \ip{x_t}{w_t})^2 - Bg_b(V_b^{(\wk)}) \leq O(\wk\log(B)). \]
\end{lemma}

We can now prove Theorem~\ref{theorem:adv}. Combining the bounds of
lemma~\ref{lem:online-greedy} and \ref{lem:vaw}, we conclude that for any subset
of coordinates $V$ of size at most $k$, we have
\begin{align}
  &\E\left[\sum_{t=1}^T (y_t - \ip{x_t}{w_t})^2\right]\\
	&\leq \sum_{b=1}^{T/B} Bg_b(V) + B(1 - \tfrac{1}{\kappa^2|V|})^\wk[g_b(\emptyset) - g_b(V)]\\
	&+ O\left(\kappa^2k\sqrt{\tfrac{d\wk\log(d) BT}{\sk}} + \frac{T}{B}\wk\log(B)\right). \label{eq:adv-regret-bound}
\end{align}
Finally, note that
\[\sum_{b=1}^{T/B} Bg_b(V) \leq \inf_{w \in \R^V}\sum_{t=1}^T (y_t - \ip{x_t}{w})^2,\]
and
\[\sum_{b=1}^{T/B} B(1 - \tfrac{1}{\kappa^2|V|})^\wk[g_b(\emptyset) - g_b(V)] \leq T \cdot \exp(-\tfrac{\wk}{\kappa^2 k}),\]
because $g_b(\emptyset) \leq 1$. Using these bounds in \eqref{eq:adv-regret-bound}, and plugging in the specified values of $B$ and $\wk$, we get the stated regret bound.
\end{proof}

\section{Conclusions and Future Work}

In this paper, we gave computationally efficient algorithms for the online sparse linear regression problem under the assumption that the design matrices of the feature vectors satisfy RIP-type properties. Since the problem is hard without any assumptions, our work is the first one to show that assumptions that are similar to the ones used to sparse recovery in the batch setting yield tractability in the online setting as well.

Several open questions remain in this line of work and will be the basis for future work. Is it possible to improve the regret bound in the agnostic setting? Can we give matching lower bounds on the regret in various settings? Is it possible to relax the RIP assumption on the design matrices and still have efficient algorithms? Some obvious weakenings of the RIP assumption we have made don't yield tractability. For example, simply assuming that the final matrix $X_T$ satisfies RIP rather than every intermediate matrix $X_t$ for large enough $t$ is not sufficient; a simple tweak to the lower bound construction of \citet{foster2016online} shows this. This tweak consists of simply padding the construction with enough dummy examples which are well-conditioned enough to overcome the ill-conditioning of the original construction so that RIP is satisfied by $X_T$. We note however that in the realizable setting, our analysis can be easily adapted to work under weaker conditions such as irrepresentability \citep{zhao2006model,javanmard2013model}.

\bibliography{ref}

\begin{thebibliography}{21}
\providecommand{\natexlab}[1]{#1}
\providecommand{\url}[1]{\texttt{#1}}
\expandafter\ifx\csname urlstyle\endcsname\relax
  \providecommand{\doi}[1]{doi: #1}\else
  \providecommand{\doi}{doi: \begingroup \urlstyle{rm}\Url}\fi

\bibitem[Amin et~al.(2015)Amin, Kale, Tesauro, and Turaga]{AKTT}
Kareem Amin, Satyen Kale, Gerald Tesauro, and Deepak~S. Turaga.
\newblock Budgeted prediction with expert advice.
\newblock In \emph{AAAI}, pages 2490--2496, 2015.

\bibitem[Belloni et~al.(2016)Belloni, Rosenbaum, and
  Tsybakov]{belloni2016linear}
Alexandre Belloni, Mathieu Rosenbaum, and Alexandre~B. Tsybakov.
\newblock Linear and conic programming estimators in high dimensional
  errors-in-variables models.
\newblock \emph{Journal of the Royal Statistical Society: Series B (Statistical
  Methodology)}, 2016.
\newblock ISSN 1467-9868.

\bibitem[Bickel et~al.(2009)Bickel, Ritov, and
  Tsybakov]{bickel2009simultaneous}
Peter~J Bickel, Ya'acov Ritov, and Alexandre~B Tsybakov.
\newblock {Simultaneous analysis of Lasso and Dantzig selector}.
\newblock \emph{The Annals of Statistics}, pages 1705--1732, 2009.

\bibitem[Boutsidis et~al.(2015)Boutsidis, Liberty, and
  Sviridenko]{boutsidis2015greedy}
Christos Boutsidis, Edo Liberty, and Maxim Sviridenko.
\newblock Greedy minimization of weakly supermodular set functions.
\newblock \emph{arXiv preprint arXiv:1502.06528}, 2015.

\bibitem[Candes and Tao(2007)]{candes2007dantzig}
Emmanuel Candes and Terence Tao.
\newblock {The Dantzig selector: statistical estimation when $p$ is much larger
  than $n$}.
\newblock \emph{The Annals of Statistics}, pages 2313--2351, 2007.

\bibitem[Candes and Tao(2005)]{candes2005decoding}
Emmanuel~J Candes and Terence Tao.
\newblock Decoding by linear programming.
\newblock \emph{IEEE transactions on information theory}, 51\penalty0
  (12):\penalty0 4203--4215, 2005.

\bibitem[Cesa-Bianchi and Lugosi(2006)]{cesa2006prediction}
Nicol\`o Cesa-Bianchi and G{\'a}bor Lugosi.
\newblock \emph{Prediction, learning, and games}.
\newblock Cambridge University Press, 2006.

\bibitem[Cesa-Bianchi et~al.(2011)Cesa-Bianchi, Shalev-Shwartz, and
  Shamir]{cesa2011efficient}
Nicol\`o Cesa-Bianchi, Shai Shalev-Shwartz, and Ohad Shamir.
\newblock Efficient learning with partially observed attributes.
\newblock \emph{Journal of Machine Learning Research}, 12\penalty0
  (Oct):\penalty0 2857--2878, 2011.

\bibitem[Foster et~al.(2015)Foster, Karloff, and Thaler]{foster2015variable}
Dean Foster, Howard Karloff, and Justin Thaler.
\newblock Variable selection is hard.
\newblock In \emph{COLT}, pages 696--709, 2015.

\bibitem[Foster et~al.(2016)Foster, Kale, and Karloff]{foster2016online}
Dean Foster, Satyen Kale, and Howard Karloff.
\newblock Online sparse linear regression.
\newblock In \emph{COLT}, 2016.

\bibitem[Hazan and Kale(2014)]{hazan-kale-2014}
Elad Hazan and Satyen Kale.
\newblock Beyond the regret minimization barrier: optimal algorithms for
  stochastic strongly-convex optimization.
\newblock \emph{Journal of Machine Learning Research}, 15\penalty0
  (1):\penalty0 2489--2512, 2014.

\bibitem[Hazan and Koren(2012)]{HK}
Elad Hazan and Tomer Koren.
\newblock Linear regression with limited observation.
\newblock In \emph{ICML}, 2012.

\bibitem[Javanmard and Montanari(2013)]{javanmard2013model}
Adel Javanmard and Andrea Montanari.
\newblock Model selection for high-dimensional regression under the generalized
  irrepresentability condition.
\newblock In \emph{NIPS}, pages 3012--3020, 2013.

\bibitem[Kale(2014)]{satyen-open-problem}
Satyen Kale.
\newblock Open problem: Efficient online sparse regression.
\newblock In \emph{COLT}, pages 1299--1301, 2014.

\bibitem[Kukliansky and Shamir(2015)]{KS}
Doron Kukliansky and Ohad Shamir.
\newblock Attribute efficient linear regression with distribution-dependent
  sampling.
\newblock In \emph{ICML}, pages 153--161, 2015.

\bibitem[Mendelson(2014)]{mendelson2014learning}
Shahar Mendelson.
\newblock Learning without concentration.
\newblock In \emph{COLT}, pages 25--39, 2014.

\bibitem[Natarajan(1995)]{natarajan1995sparse}
Balas~Kausik Natarajan.
\newblock Sparse approximate solutions to linear systems.
\newblock \emph{SIAM journal on computing}, 24\penalty0 (2):\penalty0 227--234,
  1995.

\bibitem[Rosenbaum and Tsybakov(2010)]{rosenbaum2010sparse}
Mathieu Rosenbaum and Alexandre~B. Tsybakov.
\newblock Sparse recovery under matrix uncertainty.
\newblock \emph{The Annals of Statistics}, 38\penalty0 (5):\penalty0
  2620--2651, 2010.

\bibitem[Streeter and Golovin(2008)]{golovin-streeter}
Matthew~J. Streeter and Daniel Golovin.
\newblock An online algorithm for maximizing submodular functions.
\newblock In \emph{NIPS}, pages 1577--1584, 2008.

\bibitem[Zhao and Yu(2006)]{zhao2006model}
Peng Zhao and Bin Yu.
\newblock On model selection consistency of lasso.
\newblock \emph{Journal of Machine learning research}, 7\penalty0
  (Nov):\penalty0 2541--2563, 2006.

\bibitem[Zolghadr et~al.(2013)Zolghadr, Bart{\'o}k, Greiner, Gy{\"o}rgy, and
  Szepesv{\'a}ri]{andras}
Navid Zolghadr, G{\'a}bor Bart{\'o}k, Russell Greiner, Andr{\'a}s Gy{\"o}rgy,
  and Csaba Szepesv{\'a}ri.
\newblock Online learning with costly features and labels.
\newblock In \emph{NIPS}, pages 1241--1249, 2013.

\end{thebibliography}
\bibliographystyle{plainnat}

\newpage
\appendix

\section{Proofs for Realizable Setting}
\label{section:realizable-proofs}

\begin{proof}[Proof of Lemma~\ref{lemma:dantzig}]
Let $\Delta:= \emp{w} - w^*$ be the difference between the true answer and
solution to the optimization problem. Let $S$ to be the support of $w^*$
and let $S^c = [d] \setminus S$ be the complements of $S$.
Consider the permutation $i_1,\ldots,i_{d-k}$ of $S^c$ for which $|\Delta(i_j)|
\ge |\Delta(i_{j+1})|$ for all $j$. That is, the permutation dictated by the
magnitude of the entries of $\Delta$ outside of $S$. We split $S^c$ into subsets
of size $k$ according to this permutation: Define $S_j$, for $j\ge 1$ as
$\{i_{(j-1)k+1},\dots,i_{jk}\}$. For convenience we also denote by $S_{01}$ the
set $S \cup S_1$.

Now, consider the matrix $X_{S_{01}} \in \R^{t \times |S_{01}|}$ whose columns
are those of $X$ with indices $S_{01}$. The Restricted Isometry Property of $X$
dictates that for any vector $c \in \R^{S_{01}}$,
$$
(1-\epsilon)\norm{c}_2 \le \frac{1}{\sqrt{n}} \norm{X_{S_{01}}c}_2 \le (1+\epsilon) \norm{c}_2.
$$
Let $V \subseteq \R^t$ be the subspace of dimension $|S_{01}|$ that is the image
of the linear operator $X_{S_{01}}$, and let $P_V \in \R^{t \times t}$ be the
projection matrix onto that subspace. We have, for any vector $z \in \R^t$ that
$$
(1-\epsilon) \norm{P_V z} \le \frac{1}{\sqrt{n}} \norm{X^T_{S_{01}} z} \le (1+\epsilon) \norm{P_V z}
$$
We apply this to $z = X\Delta$ and conclude that
\begin{equation}\label{equation:pvxd_upper}
\norm{P_V X \Delta} \le \frac{1}{\sqrt{t}(1-\epsilon)} \norm{X_{S_{01}}^T X \Delta}
\end{equation}
We continue to lower bound the quantity of $\norm{P_V X \Delta}$. We decompose
$P_V X \Delta$ as
\begin{equation} \label{equation:pvxd}
P_V X \Delta  = P_V X \Delta(S_{01}) + \sum_{j \ge 2} P_V X \Delta(S_j)
\end{equation}
Now, according to the definition of $V$ we that there exist vectors $\{c_j\}_{j
\ge 2}$ in $\R^{|S_{01}|}$ for which
$$
P_V X \Delta(S_j) = X_{S_{01}}c_j
$$
We now invoke Lemma 1.1 from \cite{candes2005decoding} stating that for any
$S',S''$ with $|S'|+|S''| \le 3k$ it holds
that
$$
\forall c,c' \ \ \ \frac{1}{n} \ip{X_{S'}c}{X_{S''}c'} \le (2\epsilon -\epsilon^2) \norm{c}_2 \norm{c'}_2
$$
We apply this for $S_{01}, S_j$, $j \ge 2$ and conclude that
$$
\norm{P_V X \Delta(S_j)}_2^2
= \ip{P_V X \Delta(S_j)}{X \Delta(S_j)}
\le 2\epsilon t \norm{c_j}_2 \cdot \norm{\Delta(S_j)}
\le \frac{2\epsilon \sqrt{t}}{1-\epsilon} \norm{P_V X \Delta(S_j)}_2 \cdot \norm{\Delta(S_j)}_2.
$$
Dividing through by $\norm{P_V X \Delta(S_j)}_2$, we get
\begin{equation}
\label{equation:pvxsj}
\norm{P_V X \Delta(S_j)} \le \frac{2\epsilon \sqrt{t}}{1-\epsilon} \norm{\Delta(S_j)}.
\end{equation}
Let us now bound the sum $\norm{\Delta(S_j)}$. By the definition of $S_j$ we
know that any element $i \in S_j$ has the property $\Delta(i) \le (1/k)
\norm{\Delta(S_{j-1})}_1$. Hence
$$
\sum_{j \ge 2} \norm{\Delta(S_j)} \le (1/\sqrt{k}) \sum_{j \ge 1} \norm{\Delta(S_j)}_1 = (1/\sqrt{k}) \norm{\Delta(S^c)}_1
$$
We now combine this inequality with
Equations~\eqref{equation:pvxd_upper},~\eqref{equation:pvxd} and~\eqref{equation:pvxsj}
\begin{align*}
\frac{1}{t} \norm{X_{S_{01}}^T X \Delta}
& \ge \frac{1-\epsilon}{\sqrt{t}} \norm{P_V X \Delta} \\
& \ge \frac{1-\epsilon}{\sqrt{t}} \norm{P_V X \Delta(S_{01})} - \frac{1-\epsilon}{\sqrt{n}} \sum_{j \ge 2} \norm{P_V X \Delta(S_j)} \\
& \ge \frac{1-\epsilon}{\sqrt{t}} \norm{X \Delta(S_{01})} - 2\epsilon \sum_{j \ge 2} \norm{\Delta(S_j)} \\
& \ge \frac{1-\epsilon}{\sqrt{t}} \norm{X \Delta(S_{01})} - \frac{2\epsilon}{\sqrt{k}} \norm{\Delta(S^c)}_1
\end{align*}
The third inequality holds since $X \Delta(S_{01}) \in V$ hence $P_V X
\Delta(S_{01)} = X \Delta(S_{01})$. We continue to bound the expression by
claiming that $\norm{\Delta(S)}_1 \ge \norm{\Delta(S^c)}_1$. This holds since in
$S^c$, $\emp{w}_{S^c}=\Delta(S^c)$ hence
$$
\norm{w^*}_1 = \norm{\emp{w} - \Delta(S^c) - \Delta(S)}_1
\le \norm{\emp{w}}_1 + \left( \norm{\Delta(S)}_1 - \norm{\Delta(S^c)}_1 \right)
$$
Now, the optimality of $\emp{w}$ implies $\norm{\emp{w}}_1 \le \norm{w^*}_1$, hence
indeed $\norm{\Delta(S)}_1 \ge \norm{\Delta(S^c)}_1$.
$$
\norm{\Delta(S^c)}_1
\le \norm{\Delta(S)}_1
\le \sqrt{k} \norm{\Delta(S)}_2
\le \norm{\Delta(S_{01})}_2
\le \frac{\sqrt{k}}{(1-\epsilon)\sqrt{t}} \norm{X \Delta(S_{01})}
$$
We continue the chain of inequalities
\begin{align*}
\frac{1}{t} \norm{X_{S_{01}}^T X \Delta}
& \ge \frac{1-\epsilon}{\sqrt{n}} \norm{X \Delta(S_{01})} - \frac{2\epsilon}{\sqrt{k}} \norm{\Delta(S^c)}_1 \\
& \ge \norm{X \Delta(S_{01})} \left( \frac{1-\epsilon}{\sqrt{n}} - \frac{2\epsilon}{\sqrt{k}} \cdot  \frac{\sqrt{k}}{(1-\epsilon)\sqrt{n}} \right) \\
& = \frac{(1-\epsilon)^2 - 2\epsilon}{(1-\epsilon)\sqrt{t}} \norm{X \Delta(S_{01})}
\end{align*}
Rearranging we conclude that
\begin{align*}
\norm{\Delta(S_{01})}
& \le \frac{1}{(1-\epsilon)\sqrt{t}}  \norm{X \Delta(S_{01})} & \text{(RIP of $X$)} \\
& \le \frac{1}{((1-\epsilon)^2 - 2\epsilon)t} \norm{X_{S_{01}}^T X \Delta} \\
& \le \frac{\sqrt{2k}}{(1 - 4\epsilon)t} \norm{X^T X \Delta}_\infty & \text{(since for any $z \in \R^{2k}$, $\norm{z}_2 \le \sqrt{2k} \norm{z}_\infty$)} \\
& \le C \sqrt{ \frac{d k \log (d/\delta)}{ t \sk} } \left(\sigma + \frac{d}{\sk} \norm{w^*}_1 \right) & \text{(Lemma~\ref{lemma:localization} and $\epsilon < 1/5$)} \\
\end{align*}
for some constant $C$. We continue our bound on $\norm{\Delta}$ by showing that
$\norm{\Delta(S_{01}^c)} \le \norm{\Delta(S_{01})}$
\begin{align*}
\norm{\Delta(S_{01}^c)}_2^2 \stackrel{(i)}{\le} \norm{\Delta(S^c)}_1^2 \cdot \sum_{j \ge k+1} \frac{1}{j^2}
\le \frac{1}{k} \norm{\Delta(S^c)}_1^2 \le \frac{1}{k} \norm{\Delta(S)}_1^2
\le \norm{\Delta(S)}_2^2.
\end{align*}
Inequality $(i)$ holds due to the following: Let $\alpha_i$ be the absolute
value of the $i$'th largest (in absolute value) element of $\Delta(S^c)$. It
obviously holds that $\alpha_i \le \norm{\Delta(S^c)}_1/i$. Now, according to the
definition of $S_{01}$ we have that $\norm{\Delta(S_{01}^c)}_2^2 = \sum_{j \ge
k+1} \alpha_i^2$ and the inequality follows.
Hence,
$$
\norm{\Delta(S_{01}^c)}_2
\le \norm{\Delta(S)}_2
\le \norm{\Delta(S_{01})}_2.
$$
We conclude that
$$
\norm{\Delta}_2
\le \sqrt{2} \norm{\Delta(S_{01})}_2
\le C \sqrt{ \frac{d k \log (d/\delta)}{ t \sk} } \left(\sigma + \frac{d}{\sk} \norm{w^*}_1 \right)
$$
for some universal constant $C > 0$. Since $\norm{\Delta(S)}_1 \ge \norm{\Delta(S^c)}_1$ and $|S| \leq k$ we get that
$$ \norm{\Delta}_1 \leq 2\norm{\Delta(S)}_1 \leq 2\sqrt{k}\norm{\Delta(S)}_2 \leq 2\sqrt{k}\norm{\Delta}_2 $$
and the claim follows.
\end{proof}

\begin{proof}[Proof of Lemma~\ref{lemma:sparse}]
Let $S$ be the support of $w^*$. We can decompose the square of the left hand
side as
$$
\norm{\emp{w}(\widetilde{S})  - w^*}_2^2
= \sum_{i \in S \cap {\widetilde{S}}} (\emp{w}(i) - w^*(i))^2 + \sum_{i \in \widetilde{S} \setminus S} (\emp{w}(i))^2 + \sum_{i \in S \setminus \widetilde{S}} (w^*(i))^2.
$$
We upper bound the last sum on the right hand side as
\begin{align*}
\sum_{i \in S \setminus \widetilde{S}} (w^*(i))^2
& = \sum_{i \in S \setminus \widetilde{S}} \left[(\emp{w}(i) - w^*(i)) + (\emp{w}(i)) \right]^2  \\
& \le 2 \sum_{i \in S \setminus \widetilde{S}}  (\emp{w}(i) - w^*(i))^2 + (\emp{w}(i))^2 \\
& \le 2 \sum_{i \in S \setminus \widetilde{S}}  (\emp{w}(i) - w^*(i))^2 + 2 \sum_{i \in \widetilde{S} \setminus S}  (\emp{w}(i))^2 \; ,
\end{align*}
where first inequality follows from the elementary inequality $(a+b)^2 \le 2a^2 +
2b^2$ and the second inequality is due to the fact that $\widetilde{S}$
contains top $k$ entries of $\emp{w}$ in absolute value and $|S \setminus
\widetilde{S}| = |\widetilde{S} \setminus S|$.
Hence,
\begin{align*}
\norm{\emp{w}(\widetilde{S})  - w^*}_2^2
& = \sum_{i \in S \cap {\widetilde{S}}} (\emp{w}(i) - w^*(i))^2 + \sum_{i \in \widetilde{S} \setminus S} (\emp{w}(i))^2 + \sum_{i \in S \setminus \widetilde{S}} (w^*(i))^2 \\
& \le \sum_{i \in S \cap {\widetilde{S}}} (\emp{w}(i) - w^*(i))^2 + 2 \sum_{i \in S \setminus \widetilde{S}}  (\emp{w}(i) - w^*(i))^2 + 3 \sum_{i \in \widetilde{S} \setminus S} (\emp{w}(i))^2 \\
& \le 2 \sum_{i \in S \cap {\widetilde{S}}} (\emp{w}(i) - w^*(i))^2 + 2 \sum_{i \in S \setminus \widetilde{S}}  (\emp{w}(i) - w^*(i))^2 + 3 \sum_{i \in \widetilde{S} \setminus S} (\emp{w}(i))^2 \\
& = 2 \sum_{i \in S} (\emp{w}(i) - w^*(i))^2 + 3 \sum_{i \in \widetilde{S} \setminus S} (\emp{w}(i))^2 \\
& \le 3 \sum_{i=1}^d (\emp{w}(i) - w^*(i))^2 \\
& = 3 \norm{\emp{w} - w^*}_2^2.
\end{align*}
Taking square root finishes the proof.
\end{proof}

\begin{lemma}
\label{lemma:localization}
There exists a universal constant $C > 0$ such that, with probability at least $1-\delta$, the convex
program~\eqref{equation:conv.prog} is feasible and its optimal solution $\emp{w}$ satisfies
$$
\norm{\frac{1}{t} X_t^T X_t (\emp{w} - w^*)}_{\infty} \le
C \sqrt{ \frac{d \log (d/\delta)}{t \sk} } \left(\sigma + \frac{d}{\sk} \norm{w^*}_1 \right).
$$
\end{lemma}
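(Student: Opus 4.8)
The plan is to recast the program's constraint in terms of an unbiased estimator of the Gram matrix. Define $\emp{G}_t := \emp{X}_t^T\emp{X}_t - \emp{D}_t$, so that the vector inside the norm in~\eqref{equation:conv.prog} is exactly $\frac{1}{t}\emp{X}_t^T Y_t - \frac{1}{t}\emp{G}_t w$. Conditioning on the data and the noise, so that the only randomness is the sampling of the sets $S_s$, the estimate $\emp{x}_s(i)$ is unbiased for $x_s(i)$; a direct computation shows the off-diagonal entries of $\emp{X}_t^T\emp{X}_t$ are unbiased for those of $X_t^T X_t$, and the correction $\emp{D}_t$ is designed precisely so that $\E[\emp{G}_t] = X_t^T X_t$, and likewise $\E[\emp{X}_t^T Y_t] = X_t^T Y_t$. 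Hence the sampling-conditional mean of the constraint vector evaluated at $w^*$ is $\frac{1}{t}X_t^T(Y_t - X_t w^*) = \frac{1}{t}X_t^T\eta$, the clean noise term.

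First I would establish feasibility by showing that $w^*$ obeys the constraint with probability at least $1-\delta$. Writing the $i$-th coordinate of $\frac{1}{t}\emp{X}_t^T Y_t - \frac{1}{t}\emp{G}_t w^*$ as $\frac{1}{t}\sum_{s} Z_s^{(i)}$, where the $Z_s^{(i)}$ are independent across $s$ with sampling-mean $x_s(i)\eta_s$, I split this into the Gaussian part $\frac{1}{t}\sum_s x_s(i)\eta_s$ and the sampling fluctuation $\frac{1}{t}\sum_s (Z_s^{(i)} - x_s(i)\eta_s)$. Since $\norm{x_s}_\infty\le 1$, the Gaussian part has variance at most $\sigma^2/t$, so a Gaussian tail bound and a union bound over the $d$ coordinates give a contribution of order $\sigma\sqrt{\log(d/\delta)/t}$. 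For the sampling fluctuation I would apply Bernstein's inequality; the crucial point is the second moment of the quadratic piece $\emp{x}_s(i)\ip{\emp{x}_s}{w^*}$: the $(d/\sk)$-scaling of $\emp{x}_s$ enters squared as $(d/\sk)^4$, while the probability $\sk/d$ that $i\in S_s$ brings the conditional second moment down to $\O((d/\sk)^3\norm{w^*}_1^2)$. Summing over the $t$ samples produces a variance proxy of $\O(\frac{d^3}{t\sk^3}\norm{w^*}_1^2)$, whose square root is exactly $\sqrt{\frac{d}{t\sk}}\cdot\frac{d}{\sk}\norm{w^*}_1$; with a union bound over coordinates this matches the constraint's right-hand side for a suitable universal $C$, and establishes feasibility.

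Given feasibility, I would bound the deviation by the telescoping decomposition
\[
\tfrac{1}{t}X_t^T X_t(\emp{w} - w^*) = \tfrac{1}{t}(X_t^T X_t - \emp{G}_t)\emp{w} + \big(\tfrac{1}{t}\emp{G}_t\emp{w} - \tfrac{1}{t}\emp{X}_t^T Y_t\big) + \tfrac{1}{t}(\emp{X}_t - X_t)^T Y_t + \tfrac{1}{t}X_t^T\eta.
\]
The second bracket is bounded by the constraint's right-hand side because $\emp{w}$ is feasible, the last term is the Gaussian term already handled, and the third is a linear sampling deviation controlled exactly as in the feasibility step. The first term is the crux: I bound its $\infty$-norm by $\big(\max_{i,j}\big|\tfrac{1}{t}(\emp{G}_t - X_t^T X_t)_{ij}\big|\big)\cdot\norm{\emp{w}}_1$, and since $w^*$ is feasible, optimality of $\emp{w}$ gives $\norm{\emp{w}}_1\le\norm{w^*}_1$. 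An entrywise Bernstein bound on $\emp{G}_t$ — each entry a sum of independent terms of range $\O((d/\sk)^2)$ and variance-sum $\O(d^2/(t\sk^2))$ — together with a union bound over the $d^2$ entries gives $\max_{i,j}|\tfrac{1}{t}(\emp{G}_t - X_t^TX_t)_{ij}| = \O(\tfrac{d}{\sk}\sqrt{\log(d/\delta)/t})$, which is within the target. Adding the four pieces yields the claimed bound.

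The main obstacle is the concentration of the quadratic (Gram) estimator: extracting the correct $d/\sk$ factor requires computing the conditional second moment of $\emp{x}_s(i)\ip{\emp{x}_s}{w^*}$ precisely and invoking Bernstein rather than Hoeffding, so that the variance term — not the crude range $\O((d/\sk)^2)$ — governs the bound. A secondary difficulty is that $\eta_s$ is unbounded, so the $\eta$-dependent fluctuation terms are not almost surely bounded; I would first condition on the event $\max_s|\eta_s|\le\sigma\sqrt{2\log(2t/\delta)}$, which holds with probability at least $1-\delta/2$ and is the source of the extra $\log(t/\delta)$ in the constraint's right-hand side, and then apply Bernstein on that event.
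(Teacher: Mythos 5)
Your proof is correct and follows the same overall plan as the paper's: first show $w^*$ is feasible via Bernstein/Gaussian concentration on the plug-in estimates (with the same variance accounting --- the $(d/\sk)^3$ second moment for the quadratic piece and $d/\sk$ for the linear pieces), then control $\frac{1}{t}X_t^T X_t(\emp{w}-w^*)$ by a triangle-inequality decomposition using feasibility of $\emp{w}$ and the optimality inequality $\norm{\emp{w}}_1 \le \norm{w^*}_1$. The differences are in the grouping and in the noise handling. The paper splits $\frac{1}{t}X_t^TX_t\Delta$ (with $\Delta = \emp{w}-w^*$) into $\frac{1}{t}(\emp{X}_t^T\emp{X}_t - \emp{D}_t)\Delta$ plus three sampling-error matrices applied to $\Delta$, bounding the first piece as the difference of the two constraint vectors (feasibility of \emph{both} $w^*$ and $\emp{w}$) and the rest entrywise times $\norm{\Delta}_1 \le 2\norm{w^*}_1$; you instead telescope through $Y_t$ and $\eta$, invoke the constraint only at $\emp{w}$, and multiply the Gram deviation by $\norm{\emp{w}}_1$ --- algebraically equivalent, and your direct entrywise Bernstein bound on $\emp{X}_t^T\emp{X}_t - \emp{D}_t - X_t^TX_t$ (per-term variance $\O(d^2/\sk^2)$, yielding $\O\bigl(\tfrac{d}{\sk}\sqrt{\log(d/\delta)/t}\bigr)$ per entry) is in fact marginally tighter than the paper's $\sqrt{d^3\log(d/\delta)/(t\sk^3)}$ obtained from its three-way split. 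The one substantive divergence is the noise: the paper treats $\frac{1}{t}\emp{X}_t^T\eta$ as conditionally Gaussian given $\emp{X}_t$ and uses the column-norm bound $\norm{\emp{X}_{(i)}}_2 \le \sqrt{td/\sk}$, avoiding truncation, whereas your truncation of $\max_s|\eta_s|$ costs an extra $\sqrt{\log(t/\delta)}$ on the $\sigma$-terms. This is consistent with the $\log(td/\delta)$ already present in the constraint of program~\eqref{equation:conv.prog}, and the paper's own argument in fact incurs the same factor --- feasibility of $\emp{w}$ only yields the constraint's $\log(td/\delta)$ right-hand side, so the lemma's stated $\log(d/\delta)$ is not literally attained by either argument; your version is no weaker than the paper's on this point. (Both arguments also share the paper's mild sloppiness that, under without-replacement sampling of $S_t$, the off-diagonal entries of $\emp{X}_t^T\emp{X}_t$ are only approximately unbiased.)
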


We note that the above lemma is beyond simple triangle inequality on the
feasibility constraints, as the left hand side depends on actual design matrix
$X_t$ which we do not observe, instead of $\emp{X}_t$.

\begin{proof}
To simplify notation, we drop subscript $t$. Namely, let $X = X_t$, $\emp{X} =
X_t$ and $\emp{D} = \emp{D}_t$, and also let $\eta = (\eta_1, \eta_2, \dots,
\eta_t)$ be the vector of noise variables.

First, we show that $w^*$ satisfies the constraint of~\eqref{equation:conv.prog} with probability
at least $1-\delta$. We upper bound
\begin{align*}
\norm{ \frac{1}{t} \emp{X}^T (Y - \emp{X} w^*)  + \frac{1}{t} \emp{D} w^*  }_{\infty}
& = \norm{ \left[ \frac{1}{t}  \emp{X}^T (X - \emp{X}) + \frac{1}{t} \emp{D} \right] w^* + \frac{1}{t} \emp{X}^T \eta }_{\infty} \\
& \le \norm{ \left[\frac{1}{t} \emp{X}^T (X - \emp{X})  + \frac{1}{t} \emp{D}  \right]w^* }_{\infty} + \frac{1}{t} \norm{ \emp{X}^T \eta }_{\infty}
\end{align*}
We first bound the left summand. By Lemma~\ref{lemma:concentration-inequality}, we have
\begin{align*}
\norm{ \left[ \frac{1}{t} \emp{X}^T (X - \emp{X})  +\frac{1}{t} \emp{D}  \right] w^* }_{\infty}
& \le \norm{w^*}_{1} \cdot \norm{   \frac{1}{t}  \emp{X}^T (X - \emp{X}) + \frac{1}{t} \emp{D} }_{\infty}  \\
& \le \norm{w^*}_{1} \left( \norm{ \frac{1}{t} X^T(\emp{X} -X ) }_{\infty} + \norm{ \frac{1}{t} (\emp{X} - X)^T (\emp{X} - X) - \frac{1}{t}\emp{D}  }_{\infty} \right) \\
& \le \norm{w^*}_{1} C \cdot \sqrt{\frac{d^3 \log (d/\delta)}{t \sk^3}}.
\end{align*}
For the right summand, since $\eta$ is vector of i.i.d Gaussians with variance
$\sigma^2$, with probability at least $1-\delta$,
$$
\frac{1}{t} \norm{ \emp{X}^T \eta }_{\infty} \le   C \frac{\sigma}{t} \sqrt{\log (d/\delta)}  \cdot \max_{i \in [d]} \norm{\emp{X}_{(i)}}_2
$$
where $\emp{X}_{(1)}, \emp{X}_{(2)}, \dots, \emp{X}_{(d)}$ are the columns of $\emp{X}$.
Since the absolute value of the entries of $\emp{X}$ is at most $d/\sk$,
we have $\norm{\emp{X}_{(i)}}_2 \le \sqrt{td/\sk}$ and thus
$$
\frac{1}{t} \norm{ \emp{X}^T \eta }_{\infty} \le   C \sigma \sqrt{\frac{d\log (d/\delta)}{t\sk}}.
$$
Combining the inequalities so far provides
$$
\norm{ \frac{1}{t} \emp{X}^T ( Y - \emp X w^* )  + \frac{1}{t} \emp{D} w^* }_{\infty} \le C \sqrt{ \frac{d \log (d/\delta)}{t \sk} } \left(\sigma + \frac{d}{\sk} \norm{w^*}_1 \right)
$$
and hence conclude the constraint of the optimization problem~\eqref{equation:conv.prog}
is satisfied (at least) by $w^*$ and thus the optimization problem is feasible.

Now consider the vector $\Delta:=\emp{w} - w^*$, we have
\begin{align*}
\norm{ \frac{1}{t} X^T X \Delta  }_{\infty}
& \le  \norm{ \frac{1}{t} (\emp{X}^T \emp{X}  - \emp{D}) \Delta }_\infty + \norm{ \frac{1}{t} (\emp{X}^T \emp{X}  - \emp{D} - X^TX) \Delta }_\infty \\
& \le \norm{ \frac{1}{t} (\emp{X}^T \emp{X}  - \emp{D}) \Delta }_\infty + \norm{ \frac{1}{t} (\emp{X} - X)^T X \Delta }_\infty \\
& \qquad + \norm{ \frac{1}{t} X^T (\emp{X} - X) \Delta }_\infty + \norm{ \left(\frac{1}{t}(\emp{X} - X)^T (\emp{X} - X) - \frac{1}{t} \emp{D} \right)   \Delta  }_\infty.
\end{align*}
According to Lemma~\ref{lemma:concentration-inequality} we have
$$
\norm{\frac{1}{t} X^T (\emp{X} - X) \Delta}_{\infty}
\le \norm{\frac{1}{t} X^T (\emp{X} - X) }_{\infty} \norm{ \Delta }_1
\le C \sqrt{\frac{d \log (d/\delta)}{t \sk}} ( \norm{ w^*}_1 + \norm{ \emp{w} }_1 ) \le
2C \sqrt{\frac{d \log (d/\delta)}{t \sk}} \cdot \norm{w^*}_1
$$
where the last inequality is by the optimality of $\emp{w}$. The same argument
provides an identical bound for $\norm{\frac{1}{t} (\emp{X} - X)^T X
\Delta}_{\infty}$. The last summand can also be bounded by using
Lemma~\ref{lemma:concentration-inequality} and the optimality of $\emp{w}$.
$$
\norm{\left(\frac{1}{t}(\emp{X} - X)^T (\emp{X} - X) - \frac{1}{t}\emp{D} \right) \Delta}_{\infty}
\le 2C \sqrt{\frac{d^3 \log (d/\delta)}{t \sk^3}} \cdot \norm{w^*}_1
$$
Finally, according to the feasibility of $\emp{w}$ and $w^*$ we may bound the first summand
$$
\norm{\left( \frac{1}{t} \emp{X}^T \emp{X} - \frac{1}{t} \emp{D} \right) \Delta}_{\infty}
\le 2 C \sqrt{ \frac{d \log (d/\delta)}{t \sk} } \left(\sigma + \frac{d}{\sk} \norm{w^*}_1 \right),
$$
and reach the final bound.
\end{proof}

\begin{lemma}
\label{lemma:concentration-inequality}
For any $t \ge t_0$, with probability at least $1-\delta$, the following two inequalities hold
\begin{align*}
\norm{\frac{1}{t} (\emp{X}_t - X_t)^T (\emp{X}_t - X_t) - \frac{1}{t} \emp{D}_t}_{\infty} \le  C \sqrt{\frac{d^3 \log (d/\delta)}{t {\sk} ^3}} \; , \\
\norm{\frac{1}{t} X_t^T (\emp{X}_t - X_t)}_{\infty} \le C \sqrt{\frac{d \log (d/\delta)}{t \sk}} \; ,
\end{align*}
where $\norm{\cdot}_\infty$ denotes the maximum of the absolute values of the entries of a matrix.
\end{lemma}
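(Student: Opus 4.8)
The plan is to prove both bounds entrywise and then take a union bound over the $d^2$ matrix entries. Fix a pair $(i,j)$. Writing $\indicator{i \in S_s}$ for the indicator that coordinate $i$ was sampled in round $s$, and recalling $\emp{x}_s(i) = \frac{d}{\sk} x_s(i)\indicator{i \in S_s}$ with $\E[\indicator{i\in S_s}] = \sk/d$, each entry of the two matrices is a normalized sum $\frac1t\sum_{s=1}^t Z_s$ of per-round scalars. I would first verify that, conditioned on the past (in particular on the adversarially-but-non-anticipatively chosen $x_s$), each $Z_s$ has conditional mean zero, so that $\{Z_s\}$ is a martingale difference sequence with respect to the filtration $\mc{F}_s$ generated by the sampling sets $S_1,\dots,S_s$. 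The two ingredients I then need are a uniform bound $|Z_s| \le M$ and a bound on the sum of conditional variances $\sum_s \E[Z_s^2\mid \mc{F}_{s-1}] \le V$, after which Freedman's (martingale Bernstein) inequality gives $\frac1t\left|\sum_s Z_s\right| \lesssim \sqrt{V\log(1/\delta)}/t + M\log(1/\delta)/t$ with probability $1-\delta$.

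For the second inequality the $(i,j)$ entry is $\frac1t\sum_s x_s(i)\xi_s(j)$ with $\xi_s = \emp{x}_s - x_s$; since $\E[\xi_s(j)\mid \mc{F}_{s-1}] = 0$ this is already a mean-zero martingale sum, and using $|x_s(i)|\le 1$, $|\xi_s(j)|\le \frac{d}{\sk}|x_s(j)|\le \frac{d}{\sk}$, and $\E[\xi_s(j)^2] = (\frac{d}{\sk}-1)x_s(j)^2 \le \frac{d}{\sk}$ I get $M = \frac{d}{\sk}$ and $V \le t\frac{d}{\sk}$, producing exactly the rate $\sqrt{\frac{d\log(d/\delta)}{t\sk}}$. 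The first inequality splits into off-diagonal and diagonal entries. Off-diagonal ($i\neq j$) entries are $\frac1t\sum_s\xi_s(i)\xi_s(j)$, which are mean-zero because the off-diagonal of $\emp{X}_t^T\emp{X}_t$ is unbiased (the inclusions of distinct coordinates are independent), with $M = (\frac{d}{\sk})^2$ and variance sum $\lesssim t(\frac{d}{\sk})^2$. The diagonal is the delicate part: the raw term $\xi_s(i)^2$ is biased, but $\emp{D}_t$ is designed to cancel exactly this bias. I would carry out the one-line algebraic cancellation
\[ \sum_{s=1}^t \xi_s(i)^2 - (\emp{D}_t)_{ii} = \sum_{s=1}^t \Big(1 - \tfrac{d}{\sk}\indicator{i\in S_s}\Big)x_s(i)^2, \]
whose summands have conditional mean zero (since $\E[\frac{d}{\sk}\indicator{i\in S_s}] = 1$), range $\le \frac{d}{\sk}$, and conditional variance $\le \frac{d}{\sk}x_s(i)^4 \le \frac{d}{\sk}$. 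Each of these per-entry rates is at most the claimed $\sqrt{\frac{d^3\log(d/\delta)}{t\sk^3}}$.

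Applying Freedman's inequality to each entry at failure probability $\delta/d^2$ and union-bounding turns $\log(1/\delta)$ into $\log(d^2/\delta) = \O(\log(d/\delta))$, absorbed into the universal constant $C$. The main obstacle I anticipate is twofold, and both issues live in the diagonal term. First, $\emp{D}_t$ is itself random (a plug-in built from $\emp{X}_t$), so it cannot be treated as a deterministic centering; the exact cancellation above is what rescues this, reducing the diagonal to a clean mean-zero martingale sum, and I would make sure to invoke this identity rather than any crude triangle-inequality bound on $\emp{D}_t - D_t$. Second, Freedman's inequality has two regimes, and the stated square-root rate is the variance-dominated one; I would check that the hypothesis $t \ge t_0$ (with $t_0$ chosen, as in the precise appendix statement, to be at least a constant times $\frac{d}{\sk}\log(d/\delta)$) places us in that regime, so the range term $M\log(d/\delta)/t$ is dominated by the variance term and folds into the stated bounds. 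Everything else is routine bookkeeping of constants.
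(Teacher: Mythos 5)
Your proposal is correct and follows the same skeleton as the paper's proof: entrywise Bernstein-type concentration followed by a union bound over entries, with the same variance/range bookkeeping ($M = d/\sk$, $V \le t\, d/\sk$ for the second matrix; range $(d/\sk)^2$ for the off-diagonal terms of the first). You diverge in two places, both defensibly. First, the diagonal: the paper centers $\sum_s (\emp{x}_s(i)-x_s(i))^2$ at the deterministic $D_{ii}$, applies Bernstein with variance $(d/\sk)^3$ and range $(d/\sk)^2$, and then handles $\frac{1}{t}(\emp{D}_{ii}-D_{ii})$ by a second, separate application of Bernstein (``Similarly we have\dots''). Your exact cancellation $\sum_s \xi_s(i)^2 - (\emp{D}_t)_{ii} = \sum_s \bigl(1 - \tfrac{d}{\sk}\indicator{i \in S_s}\bigr)x_s(i)^2$ checks out algebraically in both the $i \in S_s$ and $i \notin S_s$ cases, and collapses the two steps into one mean-zero sum with range and variance only $d/\sk$; you thereby prove the diagonal at the tighter rate $\sqrt{d\log(d/\delta)/(t\sk)}$, so in your version the lemma's stated $\sqrt{d^3\log(d/\delta)/(t\sk^3)}$ rate is simply slack (the paper's rate is driven by its cruder $(d/\sk)^3$ diagonal variance). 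Second, Freedman in place of Bernstein: since the paper fixes the sequence $(x_t,y_t)$ in advance and the $S_s$ are drawn independently of everything, plain Bernstein for independent summands suffices, but your martingale formulation is harmless and strictly more general. One caveat you share with the paper rather than introduce: your justification that the off-diagonal terms are mean zero (``the inclusions of distinct coordinates are independent'') is false for the sampling scheme as literally specified --- for $S_t$ uniform over subsets of size exactly $\sk$, inclusions are negatively correlated and $\E[\emp{x}_s(i)\emp{x}_s(j)] = \tfrac{d(\sk-1)}{\sk(d-1)}x_s(i)x_s(j)$, a multiplicative bias that does not decay with $t$; the paper's own claim that the off-diagonal of $\emp{X}_t^T\emp{X}_t$ is unbiased has the identical flaw, and both proofs are rescued by reading the sampling as independent Bernoulli inclusion with rate $\sk/d$ (or by adding an off-diagonal correction analogous to $\emp{D}_t$). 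Finally, your regime check that $t_0 \gtrsim (d/\sk)\log(d/\delta)$ places the bound in the variance-dominated branch is exactly the content of the paper's closing remark that for large enough $t$ the $1/\sqrt{t}$ terms dominate, made precise.
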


\begin{proof}
Throughout we use that $|x_s(i)| \le 1$ for all $i \in [d]$ and all $s \in [t]$, and (2)
$(\emp{x}_s(i) - x_s(i))^2  - \frac{1}{t}D_{ii}$ is unbiased with absolute
value of at most $(d/\sk)^2$ and variance of at most $(d/\sk)^3$. For the first
term, let's bound
$$
\left[ \frac{1}{t} (\emp{X} - X)^T (\emp{X} - X) - \frac{1}{t} \emp{D}  \right]_{ij}
= \frac{1}{t} \sum_{s=1}^t (\emp{x}_s(i) - x_s(i))(\emp{x}_s(j) - x_s(j)) - \frac{1}{t} \emp{D}_{ij}
$$
For $i = j$, we have
$$
\E\left[ \left( (\emp{x}_s(i) - x_s(i))^2  - \frac{1}{t} D_{ii}\right)^2 \right] \le \E\left[ (\emp{x}_s(i) - x_s(i))^4 \right] \le (d/\sk)^3
$$
$$
(\emp{x}_s(i) - x_s(i))^2 - \frac{1}{t} D_{ii} \le (d/\sk)^2, \ \ \ \E\left[ (\emp{x}_s(i) - x_s(i))^2 - \frac{1}{t} D_{ii} \right] = 0
$$
Hence, by Bernstein's inequality, for any $v > 0$,
$$
\Pr\left[  \left| \frac{1}{t} \sum_{s=1}^t (\emp{x}_s(i) - x_s(i))^2 - \frac{1}{t} D_{ii} \right| > v   \right]
\le 2\exp\left( -\frac{v^2 t}{(d/\sk)^3 + (d/\sk)^2 v/3} \right).
$$
It follows that for any $\delta>0$, with probability at least $1-\delta$ it holds for all $i \in [d]$ that,
$$
\left| \frac{1}{t} \sum_{s=1}^t (\emp{x}_s(i) - x_s(i))^2 - \frac{1}{t} D_{ii} \right|
\le \O \left(\frac{\log(d/\delta) d^2}{t\sk^2} + \sqrt{\frac{\log(d/\delta)d^3}{t\sk^3}} \right).
$$
Similarly we have $\frac{1}{t}(\emp{D}_{ii} - D_{ii}) \le \O \left(\frac{\log(d/\delta) d^2}{t\sk^2} + \sqrt{\frac{\log(d/\delta)d^3}{t\sk^3}} \right).$

For $i \neq j$ we use an analogous argument, only now the variance term in
Bernstein's inequality is $(d/\sk)^2$ rather than $(d/\sk)^3$, hence only reach
a tighter bound.

For the second term, we again bound via Bernstein's inequality as
$$
\left[  \frac{1}{t} X^T (\emp{X} - X) \right]_{ij}
= \frac{1}{t} \sum_{s=1}^t  x_s(i)(\emp{x}_s(j) - x_s(j))
\le \O \left( \sqrt{\frac{d \log (d/\delta)}{t \sk}} + \frac{d \log (d/\delta)}{t\sk} \right)
$$
The claim now follows by noticing that for large enough $t$, the dominating
terms are those that scale as $1/\sqrt{t}$.
\end{proof}

\begin{proof}[Proof of Theorem~\ref{theorem:realizable}]
By Lemma~\ref{lemma:dantzig},
$$
\norm{w_{t+1} - w^*}_2 \le  \O\left( \sqrt{\frac{d}{\sk} \frac{k \log (d/\delta)}{t}} (\sigma + \frac{d}{\sk} \norm{w^*}_1) \right).
$$
We have
\begin{align*}
\Regret_T(w^*) - \Regret_{t_0}(w^*)
& = \sum_{t=t_0+1}^T (y_t - \ip{x_t}{w_t})^2  - (y_t - \ip{x_t}{w^*})^2 \\
& = \sum_{t=t_0+1}^T (\ip{x_t}{w^* - w_t} + \eta_t)^2  - \eta_t^2 \\
& = \sum_{t=t_0+1}^T \left(\ip{x_t}{w^* - w_t} + 2 \eta_t \right) \ip{x_t}{w^* - w_t} \\
& = \sum_{t=t_0+1}^T 2\eta_t \ip{x_t}{w^* - w_t} + \left(\ip{x_t}{w^* - w_t} \right)^2 \; ,
\end{align*}
where we used that $y_t = \ip{x_t}{w_t} + \eta_t$.
To bound the regret we require the upper bound, that occurs with probability of at least $1-\delta$,
$$
\forall t \ge t_0 \qquad
\left| \ip{x_t}{w^* - w_t} \right|
\stackrel{(i)}{\le} \norm{x_t}_\infty \sqrt{\norm{w_t - w^*}_0} \cdot \norm{w_t - w^*}_2
\stackrel{(ii)}{\le} \O \left( k \cdot  \sqrt{\frac{d}{\sk} \frac{\log (\log(T)d/\delta)}{t}} \left(\sigma + \frac{d}{\sk} \right) \right).
$$
Inequality $(i)$ holds since $\ip{a}{b} \le \norm{a(S)}_2 \cdot \norm{b}_2$ with $S$ being
the support of $b$ and $\norm{a(S)}_2 \le \norm{a}_\infty \sqrt{|S|}$. Inequality $(ii)$
follows from Lemma~\ref{lemma:dantzig} and Lemma~\ref{lemma:sparse}, and a
union bound over the $\lceil \log(T) \rceil$ many times the vector $w_t$ is updated.
Now, for the left summand of the regret bound we have by Martingale concentration inequality
that w.p. $1-\delta$
\begin{align*}
\sum_{t=t_0+1}^T 2\eta_t \ip{x_t}{w_t-w^*}
& \le \O \left(  \sigma \sqrt{ \log(1/\delta) \sum_{t=t_0+1}^T \ip{x_t}{w_t - w^*}^2  } \right) \\
& =  \O \left(  \sigma \sqrt{ \log(1/\delta) \log(T) k^2 \cdot  \frac{d \log (d\log(T)/\delta)}{\sk}  \left(\sigma + \frac{d}{\sk} \right)^2} \right).
\end{align*}
The right summand is bounded as
$$
\sum_{t=t_0+1}^T  \ip{x_t}{w^* - w_t}^2 =
\O \left( k^2 \cdot  \frac{d \log (d\log(T)/\delta)}{\sk}  \left(\sigma + \frac{d}{\sk} \right)^2 \cdot \log(T)  \right).
$$
Clearly, the right summand dominates the left one.

It remains to bound the regret in first $t_0$ rounds. Since $w_t = 0$ for $t \le t_0$, we have
$$
\Regret_{t_0}(w^*)
= \sum_{t=1}^{t_0} 2\eta_t \ip{x_t}{w^*} + \left(\ip{x_t}{w^*} \right)^2
\le \O \left( \sigma \sqrt{t_0  \log(1/\delta)} + t_0 \right).
$$
Here, we used that  $|\ip{x_t}{w^*}| \le 1$ since $\norm{x_t}_\infty \le 1$ and $\norm{w^*}_1 \le 1$.
We also used that $\eta_t \ip{x_t}{w^*} \sim N(0, \sigma^2 \ip{x_t}{w^*}^2)$
and $\eta_1 \ip{x_1}{w^*}, \eta_2 \ip{x_2}{w^*}, \dots, \eta_{t_0} \ip{x_{t_0}}{w^*}$
are independent. Thus their sum is a Gaussian with variance at most $\sigma^2 t_0$.

Collecting all the terms along with Lemma~\ref{lem:wstart_reg}, bounding the difference $\Regret_T - \Regret_T(w^*)$, gives
\begin{equation}
\label{equation:regret-realizable}
\Regret_T \le \left(t_0 + \sqrt{t_0 \log(1/\delta)} + k^2 \cdot  \frac{d \log (d\log(T)/\delta)}{\sk}  \left(\sigma + \frac{d}{\sk} \right)^2 \cdot \log(T)  \right)
\end{equation}
\end{proof}

\begin{lemma} \label{lem:wstart_reg}
In the realizable case, w.p.\ at least $1-\delta$ we have for any sequence of $w_t$ that $\Regret_T - \Regret_T(w^*) = O(\sigma^2 k \log(d/\delta))$.
\end{lemma}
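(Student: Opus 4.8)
The plan is to bound the gap between competing against the best empirical $k$-sparse predictor and competing against the specific vector $w^*$. Let $\emp{w}^{\star}$ denote the empirical minimizer $\argmin_{w:\ \norm{w}_0 \le k} \sum_{t=1}^T (y_t - \ip{x_t}{w})^2$, which is the comparator defining $\Regret_T$. By definition $\Regret_T - \Regret_T(w^*) = \sum_{t=1}^T (y_t - \ip{x_t}{w^*})^2 - \sum_{t=1}^T (y_t - \ip{x_t}{\emp{w}^{\star}})^2$, a quantity that does \emph{not} depend on the algorithm's predictions $w_t$ at all. So the first step is to rewrite this as a purely offline comparison between $w^*$ and the empirical optimum.

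First I would substitute the realizable model $y_t = \ip{x_t}{w^*} + \eta_t$ into both sums. Writing $\Delta := \emp{w}^{\star} - w^*$, an elementary expansion gives
\begin{align*}
\Regret_T - \Regret_T(w^*) &= \sum_{t=1}^T \eta_t^2 - \sum_{t=1}^T (\ip{x_t}{\Delta} - \eta_t)^2 \\
&= 2\sum_{t=1}^T \eta_t \ip{x_t}{\Delta} - \sum_{t=1}^T \ip{x_t}{\Delta}^2 \\
&= 2\ip{X^T\eta}{\Delta} - \norm{X\Delta}_2^2.
\end{align*}
Since $w^*$ and $\emp{w}^{\star}$ are both $k$-sparse, $\Delta$ is $2k$-sparse, so the RIP of $X=X_T$ with parameters $(\tfrac15, 3k)$ gives $\norm{X\Delta}_2^2 \ge (1-\epsilon)^2 T \norm{\Delta}_2^2 \ge \tfrac{16}{25} T \norm{\Delta}_2^2$. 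The plan is thus to upper bound the linear term $2\ip{X^T\eta}{\Delta}$ and exploit the negative quadratic term to absorb it, completing the square.

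Next I would control the linear term. Let $S = \mathrm{supp}(w^*) \cup \mathrm{supp}(\emp{w}^{\star})$, so $|S| \le 2k$ and $\Delta$ is supported on $S$; hence $\ip{X^T\eta}{\Delta} = \ip{(X^T\eta)(S)}{\Delta} \le \norm{(X^T\eta)(S)}_2 \norm{\Delta}_2$. The key observation is that although $S$ is data-dependent, we can union-bound over all $\binom{d}{2k}$ choices: for a \emph{fixed} support $S$, the vector $(X^T\eta)(S)$ has Gaussian entries and $\E\norm{(X^T\eta)(S)}_2^2 = \sigma^2 \sum_{i\in S}\norm{X_{(i)}}_2^2 \le \sigma^2 \cdot 2k \cdot T$ using $\norm{x_t}_\infty \le 1$. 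A standard Gaussian concentration / chi-squared tail bound then yields $\norm{(X^T\eta)(S)}_2 \le C\sigma\sqrt{kT\log(d/\delta)}$ simultaneously over all $S$ of size $2k$ with probability $1-\delta$, the $\log\binom{d}{2k} = O(k\log(d/k))$ factor from the union bound being absorbed into $\log(d/\delta)$ up to constants. Combining, $2\ip{X^T\eta}{\Delta} \le 2C\sigma\sqrt{kT\log(d/\delta)}\,\norm{\Delta}_2$.

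Finally I would complete the square. We have
\[
\Regret_T - \Regret_T(w^*) \le 2C\sigma\sqrt{kT\log(d/\delta)}\,\norm{\Delta}_2 - \tfrac{16}{25}T\norm{\Delta}_2^2,
\]
and maximizing the right-hand side over the scalar $\norm{\Delta}_2 \ge 0$ (i.e.\ $au - bu^2 \le a^2/4b$) gives a bound of order $\tfrac{\sigma^2 k \log(d/\delta)}{T}\cdot T = O(\sigma^2 k\log(d/\delta))$, which is exactly the claimed $T$-independent bound. The main obstacle I anticipate is the union bound over the data-dependent support $S$: one must be careful that $\emp{w}^{\star}$ (and hence $S$) depends on $\eta$, so the Gaussian tail bound cannot be applied to the single realized support but must hold uniformly over all $2k$-subsets; checking that the resulting $\log\binom{d}{2k}$ enlargement still folds into $\log(d/\delta)$ without inflating the leading constant is the delicate point, and it is precisely the RIP-driven negative quadratic term that makes the self-normalized argument close with no residual dependence on $T$.
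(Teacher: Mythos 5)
Your proof is correct, but it takes a genuinely different route from the paper's. The paper never expands the loss difference or completes a square: it lower-bounds the loss of the empirical optimum $\tilde{w}$ by the unconstrained least-squares loss on its support, so that $\Regret_T - \Regret_T(w^*) \le \norm{X_{\tilde{S}} X_{\tilde{S}}^\dagger \eta}^2$ for some set $\tilde{S}$ of size at most $2k$, observes that for a \emph{fixed} $S$ this is $\sigma^2$ times a $\chi^2_{2k}$ variable (the squared norm of the projection of the noise onto a subspace of dimension at most $2k$), and union-bounds over the $d^{2k}$ possible supports. Notably, that argument uses no RIP at all, so the paper's lemma holds for arbitrary design matrices; your argument needs the lower isometry $\norm{X\Delta}_2^2 \ge \tfrac{16}{25} T \norm{\Delta}_2^2$ for the $2k$-sparse error $\Delta$ (available in context since $X_T$ satisfies RIP with $(\tfrac{1}{5},3k)$ once $T \ge t_0$), and it is precisely this negative quadratic term that lets your self-normalized bound shed the factor $T$. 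The one loosely stated step in your sketch is the uniform control of $\norm{(X^T\eta)(S)}_2$: the coordinates of $X_S^T \eta$ are correlated Gaussians, so a naive chi-squared tail does not apply entrywise; you should either invoke a Gaussian quadratic-form bound (Hanson--Wright or Laurent--Massart) together with the operator-norm control $\norm{X_S} \le (1+\epsilon)\sqrt{T}$ that the RIP upper bound supplies --- otherwise the generic bound $\norm{X_S}^2 \le 2kT$ costs an extra factor of $k$ --- or, more simply, use $\norm{(X^T\eta)(S)}_2 \le \sqrt{2k}\,\norm{X^T\eta}_\infty$ with a union bound over just the $d$ coordinates, each $X_{(i)}^T\eta$ being $N(0,\sigma^2 \norm{X_{(i)}}_2^2)$ with $\norm{X_{(i)}}_2^2 \le T$; either way you get $O(\sigma\sqrt{kT\log(d/\delta)})$ uniformly over the data-dependent support, and your completion of the square then yields exactly $O(\sigma^2 k \log(d/\delta))$. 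In short, both proofs are valid: the paper's projection argument is shorter, tighter in constants, and RIP-free, while yours is the standard ``basic inequality plus restricted eigenvalue'' argument from sparse regression --- assumption-heavier, but it makes the mechanism (the noise cross-term absorbed by the negative quadratic) explicit.
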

\begin{proof}
It is an easy exercise to show that $\Regret_T - \Regret_T(w^*)$ is equal to the regret on an algorithm that always plays $w^*$. We thus continue to bound the regret of $w^*$.

Let $\Delta \in \R^d$ be the difference between $w^*$ and $\tilde{w}$, the empirical optimal solution for the sparse regression problem. The loss associated with $w^*$ is clearly $\|\eta\|^2$, where $\eta$ is the noise term $y = X w^* +\eta $.
The loss associated with $\tilde{w}$ is
$$ \| X(w^* + \Delta) - Xw^* - \eta \|^2 = \|\eta - X\Delta \|^2 = \|\eta - X_{\tilde{S}} \Delta \|^2 $$
where $\tilde{S}$ is the support of $\Delta$, having a cardinality of at most $2k$. The closed form solution for the least-squares problem dictates that
$$ \|\eta - X_{\tilde{S}} \Delta \|^2 \geq \| \eta - X_{\tilde{S}} X_{\tilde{S}}^\dagger \eta \|^2 = \|\eta\|^2 - \| X_{\tilde{S}} X_{\tilde{S}}^\dagger \eta \|^2 \ .$$
Here, $A^\dagger$ is the pseudo inverse of a matrix $A$ and $X_S$ is the matrix obtained from the columns of $X$ whose indices are in $S$. It follows that the regret of $w^*$ is bounded by
$$  \| X_{\tilde{S}} X_{\tilde{S}}^\dagger \eta \|^2 $$
for some subset $\tilde{S}$ of size at most $2k$. To bound this quantity we use a high probability bound for $ \| X_{S} X_{S}^\dagger \eta \|^2 $ for a fixed set $S$, and take a union bound over all possible sets of cardinality $2k$. For a fixed set $S$ we have that $ \| X_{S} X_{S}^\dagger \eta \|^2 / \sigma^2 $ is distributed according to the $\chi_{2k}^2$ distribution. The tail bounds of this distribution suggest that
$$ \Pr\left[  \| X_{S} X_{S}^\dagger \eta \|^2  > 2k \sigma^2 + 2\sigma^2 \sqrt{2k x} + 2 \sigma^2 x  \right] \leq \exp(-x)$$
meaning that with probability at least $1- \delta/d^{2k}$ we have
$$ \| X_{S} X_{S}^\dagger \eta \|^2 < 2k \sigma^2 + 2\sigma^2 \sqrt{2k \cdot 2k\cdot \log(d/\delta)} + 2 \sigma^2 \cdot 2k \cdot \log(d/\delta) = O(\sigma^2 k \log(d/\delta))  $$
Taking a union bound over all possible subsets of size $\leq 2k$ we get that w.p.\ at least $1-\delta$ the regret of $w^*$ is at most $O(\sigma^2 k \log(d/\delta))$.
\end{proof}

\section{Proofs for Agnostic Setting}
\label{section:agnostic-proofs}

\def\barX{\bar{X}}

We begin with an auxiliary lemma for  Lemma~\ref{lemma:RIP.modu}, informally proving that for any matrix $\barX$ with BBRCNP (Definition~\ref{definition:block-BRCNP}) and vector $y$, the set function
\[ g(S) = \inf_{w \in \R^S} \|y - \barX w\|^2\]
is weakly supermodular.
Its proof can be found in \citep{boutsidis2015greedy}, yet for completeness we provide it here as well.

\begin{lemma} \label{lem:regression_sub} [Lemma 5 in \citep{boutsidis2015greedy}]
Let $\barX$ be a matrix whose columns have 2-norm at most $1$ and $y$ be a vector with $\|y\|_\infty \leq 1$ of dimension matching the number of rows in $X$. the set function
\[ g(S) = \inf_{w \in \R^S} \|y - X w\|^2\]
is $\alpha$-weakly supermodular for sparsity $k$ for $\alpha = \max_{S: |S| \leq k} 1/\sigma_{\min}(X_S)^2$, where $X_S$ is the submatrix of $X$ obtained by choosing the columns indexed by $S$, and $\sigma_{\min}(A)$ is the smallest singular value of $A$.
\end{lemma}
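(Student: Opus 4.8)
The plan is to work entirely with the geometry of least-squares projections. For a set $S$, let $P_S$ denote the orthogonal projection onto the column space of $X_S$, let $r_S = (I - P_S)y$ be the residual of the optimal fit, and observe that $g(S) = \norm{r_S}^2$ with $r_S$ orthogonal to every column $x_i$, $i \in S$. Monotonicity is then immediate: if $S \subseteq T$ then $\R^S \subseteq \R^T$, so the infimum defining $g(T)$ ranges over a larger subspace and $g(T) \le g(S)$. The entire content of the lemma lies in the second, approximately-decreasing-marginal-gain inequality.

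First I would record the exact value of a single-coordinate gain. Writing $\tilde x_i = (I - P_S)x_i$, the rank-one update $P_{S \cup \{i\}} = P_S + \tilde x_i \tilde x_i^\top / \norm{\tilde x_i}^2$ gives $g(S) - g(S \cup \{i\}) = \ip{r_S}{x_i}^2 / \norm{\tilde x_i}^2$ (using $\ip{r_S}{\tilde x_i} = \ip{r_S}{x_i}$). Since the columns have $2$-norm at most $1$ we have $\norm{\tilde x_i} \le \norm{x_i} \le 1$, so $g(S) - g(S \cup \{i\}) \ge \ip{r_S}{x_i}^2$; when $x_i$ lies in the span of $X_S$ both sides vanish, so the bound is harmless.

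The crux is to control the aggregate gain $g(S) - g(T)$ from above by $\sum_{i \in T \setminus S} \ip{r_S}{x_i}^2$ up to the factor $\alpha$. Let $w_S, w_T$ be the minimizers and set $\Delta = w_T - w_S$, which is supported on $T$. Since $r_S = r_T + X\Delta$ and $r_T \perp X\Delta$, Pythagoras gives $g(S) - g(T) = \norm{r_S}^2 - \norm{r_T}^2 = \norm{X\Delta}^2$, and the same orthogonality yields $\ip{r_S}{X\Delta} = \norm{X\Delta}^2 = g(S) - g(T)$. Rewriting $\ip{r_S}{X\Delta} = \ip{X_T^\top r_S}{\Delta}$, applying Cauchy--Schwarz, and then using $\norm{X\Delta} = \norm{X_T \Delta} \ge \sigma_{\min}(X_T)\,\norm{\Delta}$ (valid because $\Delta$ is supported on $T$ with $|T| \le k$), I obtain $g(S) - g(T) \le \norm{X_T^\top r_S}\,\norm{\Delta} \le \norm{X_T^\top r_S}\,\norm{X\Delta}/\sigma_{\min}(X_T)$. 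Dividing by $\norm{X\Delta} = \sqrt{g(S) - g(T)}$ and squaring gives $g(S) - g(T) \le \norm{X_T^\top r_S}^2 / \sigma_{\min}(X_T)^2$.

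It then remains to assemble the pieces. Because $\ip{r_S}{x_i} = 0$ for $i \in S$, we have $\norm{X_T^\top r_S}^2 = \sum_{i \in T \setminus S} \ip{r_S}{x_i}^2$, while $1/\sigma_{\min}(X_T)^2 \le \alpha$ since $|T| \le k$. Combining with the single-coordinate bound yields $g(S) - g(T) \le \alpha \sum_{i \in T \setminus S} \ip{r_S}{x_i}^2 \le \alpha \sum_{i \in T \setminus S} [g(S) - g(S \cup \{i\})]$, which is exactly the required inequality. I expect the main obstacle to be this aggregate step: recognizing the identity $g(S) - g(T) = \norm{X\Delta}^2$ and correctly exploiting both the orthogonality of the residual and the restricted smallest singular value to convert the inner product $\ip{X_T^\top r_S}{\Delta}$ into the sum of squared correlations, whereas the monotonicity and the per-coordinate formula are routine.
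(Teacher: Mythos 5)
Your proof is correct, and it takes a genuinely different route from the paper's. The paper (following Lemma 5 of \citet{boutsidis2015greedy}) residualizes the columns of $X_{T\setminus S}$ against the span of $X_S$, normalizes them into a matrix $Z$, and uses Pythagoras to write $g(S)-g(T)=\norm{ZZ^\dagger y}^2$; the bound then splits as $\norm{(Z^T)^\dagger}^2\cdot\norm{Z^Ty}^2$, where $\norm{Z^Ty}^2$ equals the sum of per-coordinate gains \emph{exactly} (because the $z_i$ are unit vectors), and the technical heart is showing $\norm{Z^\dagger}\le 1/\sigma_{\min}(X_T)$ by lifting a least singular vector $w$ of $Z$ to a vector $w'$ with $X_Tw'=Zw$ and $\norm{w'}\ge\norm{w}$ --- which is where the assumption $\norm{x_i}\le 1$ enters, via the normalization factors $\zeta_i\le 1$. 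You instead work with the difference of minimizers $\Delta=w_T-w_S$, use the identity $g(S)-g(T)=\norm{X\Delta}^2=\ip{r_S}{X\Delta}$, and apply Cauchy--Schwarz together with $\norm{X_T\Delta}\ge\sigma_{\min}(X_T)\norm{\Delta}$ directly; the column-norm hypothesis enters at a different point, namely in the per-coordinate \emph{inequality} $g(S)-g(S\cup\{i\})=\ip{r_S}{x_i}^2/\norm{(I-P_S)x_i}^2\ge\ip{r_S}{x_i}^2$. Your version buys simplicity: it avoids pseudoinverses of the projected-column matrix and the lifting argument entirely, replacing them with one Cauchy--Schwarz step, while paying for it by turning the paper's exact per-coordinate identity into a one-sided bound (which is all that is needed); both arguments yield the same constant $\alpha=1/\sigma_{\min}(X_T)^2$. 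Two small points you handled but should keep explicit in a writeup: the division by $\norm{X\Delta}=\sqrt{g(S)-g(T)}$ needs the trivial case $g(S)=g(T)$ set aside (the inequality holds since each marginal gain is nonnegative by monotonicity), and the rank-one projection update requires $(I-P_S)x_i\neq 0$, with the degenerate case checked separately as you did.
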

\begin{proof}
Firstly, the well known closed form solution for the least-squares problem informs us that
\begin{align*}
	g(S) & = \inf_{w \in \R^S}~ \| y - X w \|^2, \\
	&= y^T [ I - (X_{S}^T)^\dagger X_{S}^T ] y.
\end{align*}
We use the notation $A^\dagger$ for the pseudoinverse of a matrix $A$. That is, if the singular value decomposition of $A$ is $A=\sum_i \sigma_i u_i v_i^T$ with $\sigma_i>0$ then $A^\dagger = \sum_i \sigma_i^{-1} v_i u_i^T$.

\newcommand{\Zts}{Z_{T \setminus S}}

Let us first estimate $g(S)-g(T)$, for sets $S \subset T$. For brevity, define $H_S$ as the projection matrix $ X_{S} X_S^\dagger$ projecting onto the column space of $X_S$. Denote by $\Zts$ the matrix whose columns are those of $X_{T \setminus S}$ projected away from the span of $X_S$, and normalized. Namely, writing $x_i$ as the $i$'th column of $X$, $\zeta_i = \|(I-H_S) x_i\|$, $z_i = (I-H_S) x_i/\zeta_i$, and $\Zts$'s columns are $\{z_i\}_{i \in T \setminus S}$. Notice that the columns of $\Zts$ and $X_S$ are orthogonal, hence according to the Pythagorean theorem it holds that
$$
g(S) = \|y\|^2 - \|H_S y\|^2, \ \ \ g(T) = \|y\|^2 - \|H_S y\|^2 - \|\Zts \Zts^\dagger y\|^2
$$
meaning that $g(S)-g(T) = \|\Zts \Zts^\dagger y\|^2$. In particular, this implies that for any $j \notin S$ it holds that $g(S)-g(S \cup \{j\}) = (z_j^T y)^2$, since $z_j$ is a unit vector. Let us now decompose $g(S)-g(T)$.
\[
	g(S) - g(T) = \| \Zts \Zts^\dagger y   \|^2 =  \| (\Zts^T)^\dagger \Zts^T y   \|^2 \leq \| (\Zts^T)^\dagger \|^2 \cdot \| \Zts^T y \|^2
\]
The norm used in the last inequality is the matrix operator norm. We now bound both factors of the product on the RHS separately. For the first factor, we claim that $\| (\Zts^T)^\dagger \| = \| \Zts^\dagger \| \leq \|X_{T}^\dagger \|$. To see this, consider a vector $w \in \R^{|T \setminus S|}$, for convenience denote its entries by $\{w(i)\}_{i \in T \setminus S}$, and write $z_i = (x_i - \sum_{j \in S} \alpha_{ij} x_j)/\zeta_i$. We have
$$ \Zts w = \sum_{i \in T \setminus S} z_i w(i) = \sum_{i \in T \setminus S} x_i w(i) / \zeta_i - \sum_{j \in S} x_j \sum_{i \in T \setminus S} w(i) \alpha_{ij}/\zeta_i = X_{T} w'  $$
for the vector $w' \in \R^{|T|}$ defined as $w'(i) = w(i)/\zeta_i$ for $i \in T\setminus S$ and $w'(j) = - \sum_{i \in T \setminus S} w(i)\alpha_{ij}/\zeta_i$ for $j \in S$. Since $\zeta_i \leq \|x_i\| \leq 1$ we must have $\|w'\| \geq \|w\|$. Consider now the unit vector $w$ for which $\| \Zts w\| = \| \Zts^\dagger \|^{-1}$, that is, the unit norm singular vector corresponding to the smallest non-zero singular value of $\Zts$. For this $w$, and its corresponding vector $w'$, we have
$$ \|\Zts^\dagger\|^{-1} = \|\Zts w\| = \|X_T w' \| \geq \sigma_{\min}(X_T) \|w'\| \geq \sigma_{\min}(X_T)\|w\| = \sigma_{\min}(X_T).$$
It follows that
$$ \| (\Zts^T)^\dagger \|^2 = \| \Zts^\dagger \|^2 \leq 1/\sigma_{\min}(X_T)^2  $$
We continue to bound the right factor of product.
$$  \| \Zts^T y \|^2 = \sum_{i \in T \setminus S} (z_i^T y)^2 =   \sum_{i \in T \setminus S}  g(S)-g(S \cup \{i\}).$$
By combining the inequalities we obtained the required result:
$$ g(S) - g(T) \leq \left( 1/\sigma_{\min}(X_T)^2 \right)  \sum_{i \in T \setminus S}  g(S)-g(S \cup \{i\}).$$
\end{proof}

\begin{proof}[Proof of Lemma~\ref{lemma:RIP.modu}]
We would like to apply Lemma~\ref{lem:regression_sub} on the design matrix $X$.
%
%
%
The only catch is that the columns
of $X$ may not be bounded by $1$ in norm. To remedy this, let $j$ be the index
of the column with the maximum norm and consider the matrix $\bar{X} =
\frac{1}{\|X_j\|}X$ instead (here, $X_j$ is the $j$-th column of $X$; note that
$X_j = Xe_j$ for the $j$-th standard basis vector $e_j$). Now, for any subset
$S$ of coordinates,
\[ \inf_{w \in \R^S} \|y - \barX w\|^2 = \inf_{w \in \R^S} \|y - X w\|^2.\]

Thus, we conclude that the set function of interest, $g(S) = \inf_{w \in \R^S} \|y - X w\|^2$, is $\alpha$-weakly supermodular for sparsity $k$ for $\alpha =
\max_{S: |S| \leq k} \|\barX_S^\dagger\|_2^2$. For any subset of coordinates $S$
of size at most $k$, let $w$ be a unit norm right singular vector of $\barX_S$
corresponding to the smallest singular value, so that $\|\barX_S^\dagger\|_2 =
\frac{1}{\|\barX_Sw\|}$. But $\frac{1}{\|\barX_Sw\|} =
\frac{\|Xe_j\|}{\|Xw'\|}$, where $w'$ is the vector $w$ extended to all
coordinates by padding with zeros.

Since the restricted condition number of $X$ for sparsity $k$ is bounded by $\kappa$ we conclude that $\frac{\|Xe_j\|}{\|Xw'\|} \leq \kappa$. Since
this bound holds for any subset $S$ of size at most $k$, we conclude that
$\alpha \leq \kappa^2$.
\end{proof}

\begin{proof}[Proof of Lemma~\ref{lemma:greedy-modular}]
By the $\alpha$-weak supermodularity of $g$, we have
\begin{align*}
g(\emptyset) - g(V) & \leq \alpha  \cdot \sum_{j\in V} [g(\emptyset) - g(\{j\})] \\
&\leq \alpha |V| \cdot [(g(\emptyset) - g(V)) - (g(\{j^*\}) - g(V))].
\end{align*}
Rearranging, we get the claimed bounds.
\end{proof}

The following lemma gives a useful property of weakly supermodular functions.
\begin{lemma}
\label{lemma:union-modular}
Let $g(\cdot)$ be a $(k, \alpha)$-weakly supermodular set function and $U$ be a subset with $|U|<k$. Then $g'(S) := g(U \cup S)$ is $(k-|U|, \alpha)$-weakly supermodular.
\end{lemma}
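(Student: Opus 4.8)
The plan is to reduce directly to the defining properties of $g$ via the substitution that shifts every set by $U$. Given $S \subseteq T \subseteq [d]$ with $|T| \leq k - |U|$, I would set $S' := U \cup S$ and $T' := U \cup T$ and first verify that the pair $(S',T')$ is admissible for the $(k,\alpha)$-weak supermodularity of $g$ in the sense of Definition~\ref{def: weak-sup-modu}. Since $S \subseteq T$ we have $S' \subseteq T'$, and since $|T'| = |U \cup T| \leq |U| + |T| \leq |U| + (k - |U|) = k$, the cardinality constraint $|T'| \leq k$ holds. (The hypothesis $|U| < k$ merely guarantees $k - |U| \geq 1$, so that $k-|U|$ is a legitimate sparsity parameter.)

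For the monotonicity of $g'$, I would simply observe that $g'(T) = g(T') \leq g(S') = g'(S)$, which is exactly monotonicity of $g$ applied to $S' \subseteq T'$. For the approximately decreasing marginal gain, I would apply the second property of $g$ to $(S',T')$, obtaining $g(S') - g(T') \leq \alpha \sum_{i \in T' \setminus S'} [g(S') - g(S' \cup \{i\})]$. The left-hand side is precisely $g'(S) - g'(T)$, so the only thing left is to match the index set of the sum to the one demanded in the target inequality, namely $T \setminus S$.

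The key (and essentially only) observation is the relation between $T' \setminus S'$ and $T \setminus S$: we have $T' \setminus S' = (U \cup T) \setminus (U \cup S) = (T \setminus S) \setminus U$, which can be a strict subset of $T \setminus S$. However, for any $i \in (T \setminus S) \cap U$ one has $S' \cup \{i\} = U \cup S \cup \{i\} = U \cup S = S'$, so the corresponding marginal $g'(S) - g'(S \cup \{i\}) = g(S') - g(S' \cup \{i\}) = 0$. Thus the sum over $T \setminus S$ and the sum over $(T \setminus S) \setminus U$ agree term-by-term, and substituting back yields the desired bound $g'(S) - g'(T) \leq \alpha \sum_{i \in T \setminus S} [g'(S) - g'(S \cup \{i\})]$, completing the verification.

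The main obstacle is purely the bookkeeping in this last step: recognizing that the definition of weak supermodularity for $g'$ sums the marginals over $T \setminus S$, whereas the inequality inherited from $g$ ranges over $T' \setminus S' = (T \setminus S) \setminus U$, and that the discrepancy is harmless because adding an element already contained in $U$ leaves $g'$ unchanged. Everything else is a direct translation of the two defining inequalities of $(k,\alpha)$-weak supermodularity through the map $S \mapsto U \cup S$, with $\alpha$ preserved and the sparsity level reduced from $k$ to $k - |U|$.
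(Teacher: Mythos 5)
Your proof is correct and follows essentially the same route as the paper's: both apply the weak supermodularity of $g$ to the shifted pair $(U \cup S, U \cup T)$ and then reconcile the index set $(T\setminus S)\setminus U$ with $T\setminus S$. Your version is if anything slightly more careful, since you verify the cardinality constraint $|U \cup T| \leq k$ and monotonicity explicitly, and observe that the extra marginals for $i \in (T\setminus S)\cap U$ vanish exactly, whereas the paper extends the sum with an inequality without comment.
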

\begin{proof}
	For any two subsets $S \subseteq T$ with $|T| \leq k - |U|$, we have
	\begin{align*}
	g'(S) - g'(T)&= g(U \cup S) - g(U \cup T) \leq \alpha \sum_{j \in (T \cup U) \backslash (S\cup U)} [g(U \cup S) - g(U \cup S \cup \{j\})] \\
			 &\leq  \alpha \sum_{j \in T \backslash S} [g(U \cup S) - g(U \cup S \cup \{j\})] = \alpha \sum_{j \in T \backslash S} [g'(S) - g'(S\cup \{j\})].
	\end{align*}
\end{proof}

\begin{proof}[Proof of Lemma~\ref{lem:online-greedy}]

For $i \in \{0, 1, \ldots, \wk\}$, define the set function $g_b^{(i)}$ as
$g_b^{(i)}(S) = g_b(S \cup V_b^{(i)})$.

First, we analyze the performance of the BEXP algorithms. Fix any $i \in [\wk]$
and consider $\bexp^{(i)}$. Conceptually, for any $j \in [d]$, the loss of
expert $j$ at the end of mini-batch $b$ is $g_b(V_b^{(i-1)} \cup {j})$ (note
that this loss is only evaluated for $j \in U_b^{(i)}$ in the algorithm). To
bound the regret, we need to bound the magnitude of the losses. Note that for
any subset $S$, we have $0 \leq g_b(S) \leq \frac{1}{B} \sum_{t \in \mc{T}_b}
y_t^2 \leq 1$. Thus, the regret guarantee of BEXP (Theorem~\ref{theorem:bexp})
implies that for any $i \in [\wk]$ and any $j \in [d]$, we have
\[
	\E\left[\sum_{b=1}^{T/B} g_b(V_b^{(i-1)} \cup \{j_b^{(i)}\})\right] \leq \sum_{b=1}^{T/B} g_b(V_b^{(i-1)} \cup \{j\}) + 2\sqrt{\tfrac{d\wk\log(d)T}{\sk B}}.
\]
The expectation above is conditioned on the randomness in $V_b^{(i-1)}$, for $b \in [T/B]$. Rewriting the above inequality using the $g^{(i-1)}$ and $g^{(i)}$ functions, and using the fact that $V_b^{(i-1)} \cup \{j_b^{(i)}\} = V_b^{(i)}$,  we get
\begin{equation} \label{eq:bexp-regret}
\E\left[\sum_{b=1}^{T/B} g_b^{(i)}(\emptyset)\right] \leq \sum_{b=1}^{T/B} g_b^{(i-1)}(\{j\}) + 2\sqrt{\tfrac{d\wk\log(d)T}{\sk B}}.
\end{equation}

Next, since we assumed that the sequence of feature vectors satisfies BBRCNP
with parameters $(\epsilon, \wk + k)$, Lemma~\ref{lemma:RIP.modu} implies that
the set function $g_b$ defined in \eqref{equation:gt} is $(\wk + k,
\kappa^2)$-weakly supermodular for $\kappa = \frac{1+\epsilon}{1-\epsilon}$. By Lemma~\ref{lemma:union-modular}, the set function $g_b^{(i)}$ is $(k,
\kappa^2)$-weakly supermodular (since $|V_b^{(i)}| \leq \wk$).

It is easy to check that the sum of weakly supermodular functions is also weakly supermodular (with the same parameters), and hence $\sum_{b = 1}^{T/B}g_b^{(i-1)}$ is also $(k, \kappa^2)$-weakly supermodular. Hence, by Lemma~\ref{lemma:greedy-modular}, if $j^* = \argmin_j \sum_{b = 1}^{T/B}g_b^{(i-1)}(\{j\})$, we have, for any subset $V$ of size at most $k$,
\[ \sum_{b = 1}^{T/B}g_b^{(i-1)}(\{j^*\}) - g_b^{(i-1)}(V) \leq (1 - \tfrac{1}{\kappa^2|V|})[\sum_{b = 1}^{T/B}g_b^{(i-1)}(\emptyset) - g_b^{(i-1)}(V)].\]
Since $g_b(V) \geq g_b(V \cup V_b^{(i-1)}) = g_b^{(i-1)}(V)$, the above inequality implies that
\[ \sum_{b = 1}^{T/B}g_b^{(i-1)}(\{j^*\}) - g_b(V) \leq (1 - \tfrac{1}{\kappa^2|V|})[\sum_{b = 1}^{T/B}g_b^{(i-1)}(\emptyset) - g_b(V)].\]
Combining this bound with \eqref{eq:bexp-regret} for $j = j^*$, we get
\[
\E\left[\sum_{b=1}^{T/B} g_b^{(i)}(\emptyset) - g_b(V)\right] \leq (1 - \tfrac{1}{\kappa^2|V|})[\sum_{b = 1}^{T/B}g_b^{(i-1)}(\emptyset) - g_b(V)] + 2\sqrt{\tfrac{d\wk\log(d)T}{\sk B}}.
\]
Applying this bound recursively for $i \in [\wk]$ and simplifying, we get
\[
\E\left[\sum_{b=1}^{T/B} g_b^{(\wk)}(\emptyset) - g_b(V)\right] \leq (1 - \tfrac{1}{\kappa^2|V|})^\wk[\sum_{b = 1}^{T/B}g_b^{(0)}(\emptyset) - g_b(V)] + 2\kappa^2|V|\sqrt{\tfrac{d\wk\log(d) T}{\sk B}}.
\]
Using the definitions of $g_b^{(\wk)}$ and $g_b^{(0)}$, and the fact that $|V| \leq k$, we get the claimed bound.
\end{proof}

\end{document}